\setlist{nosep} 
\newcommand{\T}{^\mathrm{T}}
\newcommand{\tr}{\mathrm{tr}}
\def\eqref#1{equation~\ref{#1}}
\def\1{\bm{1}}
\def\rd{{\textnormal{d}}}
\def\vb{{\bm{b}}}
\def\vg{{\bm{g}}}
\def\vu{{\bm{u}}}
\def\vw{{\bm{w}}}
\def\vx{{\bm{x}}}
\def\vz{{\bm{z}}}
\def\mF{{\bm{F}}}
\def\mH{{\bm{H}}}
\def\mI{{\bm{I}}}
\def\mU{{\bm{U}}}
\def\mW{{\bm{W}}}
\def\mX{{\bm{X}}}
\def\mY{{\bm{Y}}}
\def\mZ{{\bm{Z}}}
\DeclareMathAlphabet{\mathsfit}{\encodingdefault}{\sfdefault}{m}{sl}
\SetMathAlphabet{\mathsfit}{bold}{\encodingdefault}{\sfdefault}{bx}{n}
\def\sI{{\mathbb{I}}}
\def\sR{{\mathbb{R}}}
\def\sX{{\mathbb{X}}}
\newcommand{\E}{\mathbb{E}}
\newcommand\numberthis{\addtocounter{equation}{1}\tag{\theequation}}
\newcommand{\xmark}{\ding{55}}%
\newtheorem{theorem}{Theorem}
\newtheorem{lemma}{Lemma}
\newtheorem{proposition}{Proposition}
\newtheorem{remark}{Remark}
\newtheorem{assumption}{Assumption}
\icmltitlerunning{Large-Scale Multi-Agent Deep FBSDEs}
\begin{document}

\twocolumn[
\icmltitle{Large-Scale Multi-Agent Deep FBSDEs}



\begin{icmlauthorlist}
\icmlauthor{Tianrong Chen}{goo}
\icmlauthor{Ziyi Wang}{ml}
\icmlauthor{Ioannis Exarchos}{to}
\icmlauthor{Evangelos Theodorou}{ml,ae}
\end{icmlauthorlist}

\icmlaffiliation{to}{Department of Computer Science, Stanford University, Stanford, USA}
\icmlaffiliation{goo}{School of Electrical and Computer Engineering, Georgia Institute of Technology, Atlanta, USA}
\icmlaffiliation{ml}{Center for Machine Learning, Georgia Institute of Technology, Atlanta, USA}
\icmlaffiliation{ae}{School of Aerospace Engineering, Georgia Institute of Technology, Atlanta, USA}

\icmlcorrespondingauthor{Tianrong Chen}{tchen429@gatech.edu}
\icmlkeywords{Machine Learning, ICML}
\vskip 0.3in
]



\printAffiliationsAndNotice{}  

\begin{abstract}
In this paper we present a scalable deep learning framework for finding Markovian Nash Equilibria in multi-agent stochastic games using fictitious play. The motivation is inspired by theoretical analysis of Forward Backward Stochastic Differential Equations (FBSDE) and their implementation in a deep learning setting, which is the source of our algorithm's sample efficiency improvement. By taking advantage of the permutation-invariant property of agents in symmetric games, the scalability and performance is further enhanced significantly. We showcase superior performance of our framework over the state-of-the-art deep fictitious play algorithm on an inter-bank lending/borrowing problem in terms of multiple metrics. More importantly, our approach scales up to 3000 agents in simulation, a scale which, to the best of our knowledge, represents a new state-of-the-art. We also demonstrate the applicability of our framework in robotics on a belief space autonomous racing problem.
\end{abstract}
\section{Introduction}
Stochastic Differential Games (SDG) represent a framework for investigating scenarios where multiple players make decisions in a stochastic environment. The theory of differential games dates back to the seminal work of \citet{Isaacs1965} studying two-player zero-sum dynamic games, with the stochastic extension first appearing in \citet{Kushner1969}. A key step in the study of games is to obtain the Nash equilibrium among players \citep{osborne1994course}. A Nash equilibrium represents the solution of non-cooperative games where two or more players are involved. At the equilibrium, each player cannot gain any benefit by modifying his/her own strategy given opponents' strategy. In the context of adversarial multi-objective games, the Nash equilibrium can be represented as a system of coupled Hamilton-Jacobi-Bellman (HJB) equations when the system satisfies the Markovian property. Analytical solutions exist only for few special cases. Therefore, obtaining the Nash equilibrium solution is usually done numerically, and this can become challenging as the number of states/agents increases. Despite extensive theoretical work \cite{buckdahn2008stochastic,ramachandran2012stochastic}, the algorithmic part has received less attention and is limited to special cases of differential games (e.g., \citet{Duncan2015}), or suffers from the curse of dimensionality \citep{Kushner2002}. Nevertheless, SDG has a variety of applications including in robotics, autonomy, economics and management. Relevant examples include \citet{mataramvura2008risk}, which formulates portfolio management as a SDG in order to obtain a market portfolio that minimizes the convex risk measure of a terminal wealth index value, as well as \citet{prasad2004competitive}, which investigates optimal advertising spending in duopolistic settings via SDG. 

Researchers have also been solving differential games via Reinforcement Learning (RL) \cite{harmon1995reinforcement}. \citet{kamalapurkar2016model} seeks to combine differential game theory and RL to solve cooperative control problems in which agents are coupled by graphs. Multi-agent RL (MARL) is an extension of RL, which is a more complex task due to internal interactions between agents and external interactions with environment. Independent learning \citep{tan1993multi} is an approach which assumes other agents are part of environment, and it suffers from the unstable learning problem. Training agents by augmenting their states and actions is called centralized training and executing \citep{yang2019alpha}, and scalability of it is limited.
Another method is centralized training and decentralized execute (CTDE), however the challenge therein lies on how to decompose value function in the execute phase for value-based MARL. \citet{sunehag2018value, zhou2019factorized, rashid2018qmix, son2019qtran,mahajan2019maven} achieve remarkable success in this vein. However, RL usually adopts model-free learning scheme in discrete time. Here we leverage the HJB PDE which contains information of the dynamics to develop data-efficient algorithm in continuous time. The decomposition of value function is characterized by decoupled property of HJB PDE by modifying nonlinear Feynman-Kac Lemma \citep{karatzas1991brownian} in multi-agent scenario.

Another vein of related research marries the mathematical formulation of a differential game with nonlinear PDE. This motivates the algorithmic development for differential games that combines elements of PDE theory with deep learning. Recent encouraging results  \citep{han2018solving,raissi2018forward} in solving nonlinear PDEs with deep learning illustrate the scalability and numerical efficiency of neural networks. The transition from a PDE formulation to a trainable neural network is done via a system of Forward-Backward Stochastic Differential Equations (FBSDEs). Specifically, certain PDE solutions are linked to solutions of FBSDEs, and the latter can be solved using a suitably defined neural network architecture. This is known as the \textit{Deep FBSDE} approach.   \citet{han2018solving,pereira2019learning,wang2019deep} utilize various deep neural network architectures to solve such stochastic systems. 
However, these algorithms address single agent dynamical systems. Two-player zero-sum games using FBSDEs were initially developed in \citet{exarchos2019stochasticgames} and transferred to a deep learning setting in \citet{wang2019deepgame}. Recently, \citet{hu2019deep} brought deep learning into fictitious play to solve multi-agent non-zero-sum games. \citet{han2019deep} introduced the Deep FBSDEs to a multi-agent scenario and the concept of fictitious play. Furthermore, \citet{han2020convergence} gives the convergence analysis of the framework.
Our work mainly focuses on stochastic differential game with homogeneous agents which can also be seen as the case of mean field game with finite number of agents. Under mild assumptions, the approximation error of mean field game is of order $N^{-1/(d+4)}$ \cite{carmona2013probabilistic} where $N$ is the number of agents and $d$ is the state dimension of individual agent. Our algorithm can mitigate the approximation error when the number of agents is moderate.

In this work, we first extend the theoretical analysis in \citet{han2020convergence} to the scenario where the forward process includes a drift term. Suggested by the theoretic result, we integrate importance sampling \citep{bender2010importance} into the existing framework. Furthermore, by leveraging the property of symmetric game, we introduce an invariant feature representation to reduce the complexity and significantly increase the number of agents the framework can handle. The main contribution of our work is threefold:
\begin{enumerate}
  \item We theoretically analyze FBSDE with drift term under fictitious play problem setup, and explain the efficiency of importance sampling analytically and numerically.
  \item We introduce an efficient Deep FBSDE framework for solving stochastic multi-agent games via fictitious play that outperforms the current state-of-the-art in performance and runtime/memory efficiency on an inter-bank lending/borrowing example. We demonstrate that our approach scales to a much larger number of agents (up to 3,000 agents, compared to 100 in existing works). To the best of our knowledge, this represents a new state-of-the-art.
  \item We showcase the applicability of our framework to robotics on a belief-space autonomous racing problem which has larger individual control and state space. The experiments demonstrate that the BSDE has decoupled property and provides the possibility of applications for competitive scenario, and the model can extract informative features with partially observed input data.
\end{enumerate}
The rest of the paper is organized as follows: in $\S$\ref{sec:prelim} we present the notation and mathematical preliminaries. In $\S$\ref{sec:SDFP-FBSDE} we introduce the Scaled Deep Fictitious Play FBSDE (SDFP-FBSDE) algorithm, with simulation results following in $\S$\ref{sec:simulation}. We conclude the paper in $\S$\ref{sec:conclusion}.

\section{Notation and Preliminaries}
\label{sec:prelim}

\begin{table}[h]
	\caption{Mathematical notation.}
	\label{table:prelim}
	\vskip 0.1in
	\begin{center}
	\begin{small}
	\begin{tabular}{r|cr}
	\toprule
	\textsc{Characters} & \textsc{Definitions}  \\
	\midrule
	$\mX$    & \text{quantities for all agents}\\ [1pt]
	$X^i/X_i$ & quantities for $i$th agent\\ [1pt]
	$\mX_{-i}$ & quantities from all agents except $i$th\\[1pt]
	$\vx$ & realization of $\mX$\\[1pt]
	\bottomrule
	\end{tabular}
	\end{small}
	\end{center}
	\vskip -0.1in
\end{table}

Throughout the paper the notation will be:
\begin{enumerate}
   \item Bold letters denote quantities for all players\footnote[1]{Agent and player are used interchangeably in this paper}
   \item (sup)subscript $i$ indicates quantities for player $i$.
   \item Bold letters with (sup)subscript $-i$ indicates quantities for all players except $i$.
   \item Lower case letter is the realization of the upper case random variable.
\end{enumerate}
Table.\ref{table:prelim} contains examples for the notation defined above. The baseline used in this paper refers to \citet{han2019deep}.

\subsection{HJB PDE and FBSDE}
Fictitious play is a learning rule \citep{brown1951iterative} where each player presumes other players' strategies to be fixed. An $N$-player game can then be decoupled into $N$ individual decision-making problems which can be solved iteratively over $M$ stages. When each agent converges to a stationary strategy at stage $m-1$, this strategy will become the stationary strategy for other players at stage $m$. We consider a $N$-player non-cooperative SDG with dynamics: 
\begin{align*}
\vspace{-5pt}
		\rd  \mX_t&=\big(f(\mX_t, t) +G(\mX_t, t)\mU(\mX_t)\big)\rd t+\Sigma(
		\mX_t, t)\rd\boldsymbol{W}_t\\ \mX_{t_0}&=\vx_{t_0},
		\numberthis \label{state_process}
\end{align*}
where $\mX=(X_1, X_2,\dots,X_N)$ is a vector containing the state process of all agents generated by their controls $\mU=(U_1,U_2,\dots,U_N)$  defined on the space $\mathcal{X}$ and $\mathcal{U}, $ with $X_i \in \mathcal{X}_i \subseteq \mathbb{R}^{n_x}$ and $U_i\in\mathcal{U}_i\subseteq\mathbb{R}^{n_u}$. Here, $f:\mathcal{X} \times [t_0,T] \rightarrow \mathcal{X}$ represents the drift dynamics, $G:\mathcal{X} \times [t_0,T] \rightarrow \mathcal{X}\times\mathcal{U}$ represents the actuator dynamics, and $\Sigma: \mathcal{X}\times[t_0,T] \rightarrow \mathcal{X}\times\mathbb{R}^{n_w}$ represents the diffusion term. 
The state process is also driven by an $n_w$-dimensional independent Brownian motion $W_i$, and denoted $\mW=(W_1, W_2, \dots, W_{N})$.\\
Given the other agents' strategies, the stochastic optimal control problem for agent $i$ under the fictitious play assumption is defined as minimizing the expectation of the cumulative cost functional $J^i_t$
	\begin{equation}\begin{split}\label{cumulative_loss}
		&J^i_t(\mX,U_{i,m};\mU_{-i,m-1})=\\
		&\E\left[g(\mX_T)+\int_t^T C^i(\mX_\tau,U_{i,m}(\mX_\tau);\mU_{-i,m-1})\rd \tau\right],
		\end{split}
	\end{equation}
   where $g:\mathcal{X} \rightarrow \sR^{+}$ is the terminal cost, and $C^i:\mathcal{X} \times \mathcal{U} \rightarrow \sR^{+}$ is the running cost for the $i$-th player. In this paper we assume that the running cost is of the form $C^i(\mX,U_{i})=q^i(\mX)+\frac{1}{2}U_{i}^\mathrm{T}RU_{i} +  \mX^\mathrm{T}Q U_{i}$. We use the double subscript $U_{i,m}$ to denote the control of agent $i$ at stage $m$. We can define value function of each player as
   \begin{equation}
   \begin{split}
           &V^i_t(\mX_t)=\inf \limits_{u_{i,m} \in \mathcal{U}_i}\left[J_t^i(\mX,U_{i,m};\mU_{-i,m-1})\right]\\
           &V^i_T(\mX_T)=g(\mX_T).\label{Value_function}
        \end{split}
      \end{equation}
	Assume that $V^i_t$ in eq.~(\ref{Value_function}) is once differentiable w.r.t. $t$ and twice differentiable w.r.t. $\mX$. Here we drop the functional dependencies for simplicity. Then, standard stochastic optimal control theory leads to the HJB PDE:
\begin{align}
    &V^i_t+\inf_{u^i \in \mathcal{U}_i}\left\{V_x^{i\mathrm{T}}(f+G \mU)+C^i\right\}+\frac{1}{2}\tr(V^i_{xx}\Sigma\Sigma\T)=0 \label{vanilla_form_HJB}
\end{align}
The function inside the infimum is known as the Hamiltonian function $\mH(t,\mX,\mU,Z^i)$. $Z^i$ is known as adjoint states and is defined as $Z^i=\Sigma\T V_x^i$ in literature \citep{yong1999stochastic}. If the optimal control can be obtained, by plugging back to eq.\ref{vanilla_form_HJB}, one can have,
	\begin{equation}\label{Final_form_HJB}
	\begin{aligned}
    &V^i_t+h+V^{i\mathrm{T}}_x (f+G \mU_{0,*})+\frac{1}{2}\tr(V^i_{xx}\Sigma\Sigma\T)=0,
	\end{aligned}    
	\end{equation}
where $h^i=C^{i*}+G\mU_{*,0}$. The first and second subscript of $\mU$ represents for the control taken by $i$th and $-i$th agent respectively. The $*$ denotes the optimal control computed from $U_{i}^*=-R^{-1}(G_i^\mathrm{T}V^i_x+Q_i^\mathrm{T}\mX)$ and the 'zero' represents for taking zero control $\mU_{-i}=0$. For instance, $\mU_{0,*}$ denotes the augmentation of $U_{i}=0$ and $\mU_{-i}=\mU_{-i}^*$ as $(U_1^*, \cdots, U_{i-1}^*, 0, U_{i+1}^*,\cdots, U_N^*)$ and $\mU_{*, 0} = (0,\cdots, 0,U_i^*, 0,\cdots, 0)$. The derivation is included in Appendix \ref{Appendix:hjb_derivation} for completeness. The value function $V^{i}_t$ in the HJB PDE can be related to a set of FBSDEs by non-linear Fayman-Kac Lemma \citet{karatzas1991brownian},
	\begin{equation}\label{fnc:FBSDE}
	\begin{aligned}
        &\rd  \mX_t=(f +G\mU_{0,*})\rd t+\Sigma\rd\boldsymbol{W}_t,\, \mX_{t_0}=\vx_{t_0} &\text{ (FSDE)},\\ 
        &\rd Y^i_t=-h^i_t\rd t+Z^i_t \rd W_t,\, Y^i_T=g(\mX_T), &\text{ (BSDE)}, \\
	\end{aligned}    
	\end{equation}
where the backward process corresponds to the value function, and $t\in[t_0,T]$. The derivation can be found in Appendix \ref{Appendix:FBSDE_derivation}. Here we denote value function $V$ as $Y$ in order to be aligned with classic FBSDE literature. 
\begin{remark}
     The problem of solving HJB PDE (\ref{Final_form_HJB}) will be transformed to solve a FBSDE system. (\ref{fnc:FBSDE}). The $i$th player will provide zero control in the drift term in the forward process, and the rest of agents will execute the optimal policy according to the Value function.
    
    The HJB PDE (\ref{Final_form_HJB}) can be solved by Numerical PDE solvers based on finite element and finite difference methods \cite{greif2017numerical}. Howerver, these methods do not scale beyond very few dimensions, as they rely on discretization.
\end{remark}
\subsection{Deep Fictitious Play FBSDE Controller} \label{sec:DFP Controller}
In \citet{han2019deep}, the FBSDE system in (\ref{fnc:FBSDE}) is solved by a neural network where each agent has a Deep Fictitious Play FBSDE Controller (DFP-FBSDE) to generate the policy. 

The neural network architecture is described as Fig.\ref{fig:architecture}, where the backbone is a unique fully connected neural network followed by a BatchNorm module \citep{ioffe2015batch} (hereafter we shorthand this backbone as FC) at each timestep. Since \citet{han2019deep} do not apply feature extraction, it is equivalent to setting feature extractor as identity mapping: $\hat{\mF}_t=\mI\cdot \mX_t$. The output of the network is $Z$ component in functions (\ref{fnc:FBSDE}). The use of a neural network allows for the BSDE to be propagated forward (red path) from the initial condition predicted by another FC along with the FSDE (green 
path).
\begin{figure}[t]
  \centering
  \includegraphics[width=1.0\linewidth]{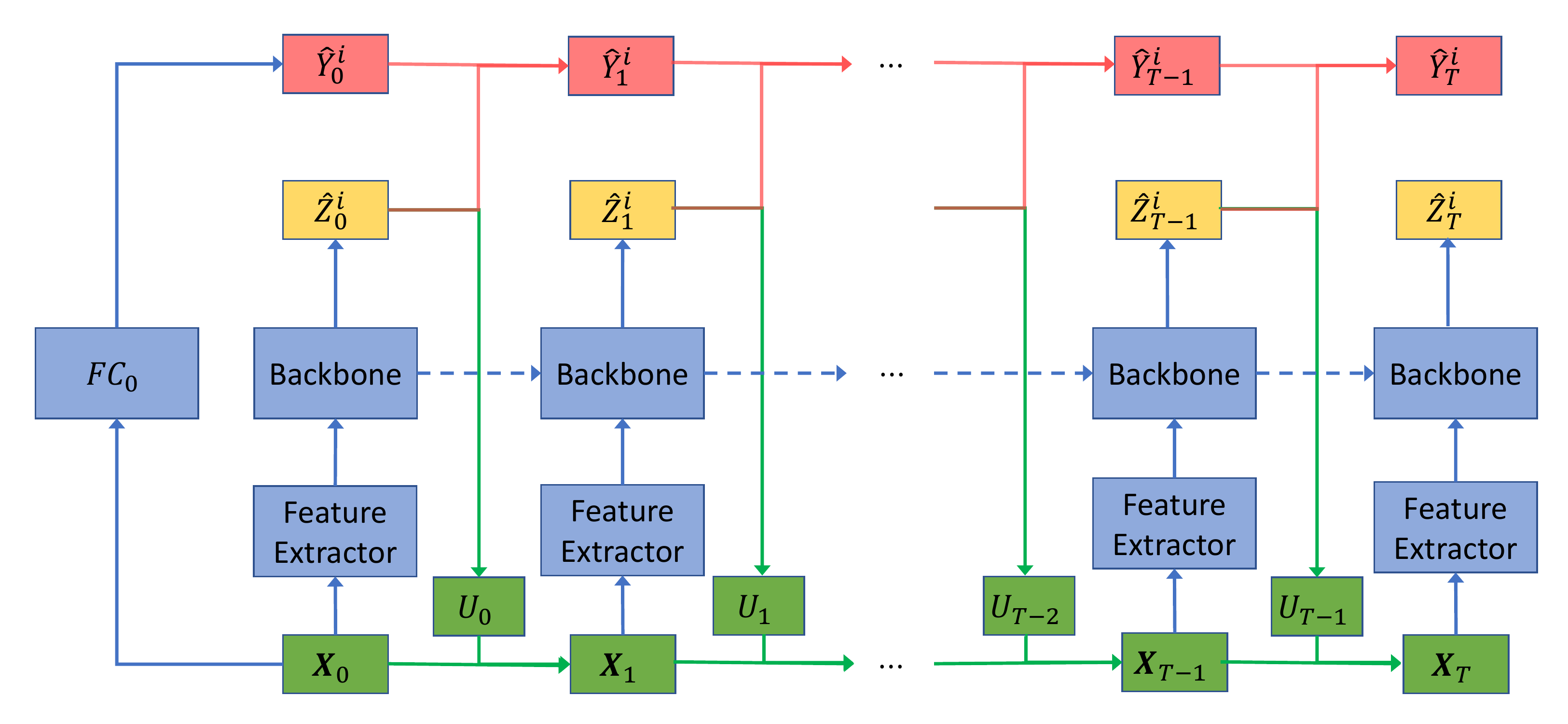}
  \vskip -0.2in
  \caption{
    FBSDE Network for a single agent. The dashed arrow indicates hidden states propagation if LSTM is chosen as backbone. The dash arrow would disappear when FC is chosen.
  }%
  \label{fig:architecture}%
  \vskip -0.2in
\end{figure}

The training set of DFP-FBSDE contains randomly selected initial states $\vx_0$, and the training labels are defined as the value function at terminal time $Y^i_T$(\ref{fnc:FBSDE}) which depends on random variable $\mX_T$. The training loss (\ref{fnc:eval-loss}) is defined as the mean square error between the predicted value $\hat{Y}_T^i$ propagated by BSDE and the true terminal value. The metrics used on training/test data are summarized in Appendix \ref{appendix:metrics}.

By collecting the FBSDEs from all agents, the final DFP-FBSDE framework will be formed and the architecture can be found in Appendix \ref{appendix:NN_arch}. Furthermore, \citet{pereira2019learning} extends Deep FBSDE to robotics applications and \citet{han2019deep} applys Deep FBSDE for searching Nash-Equlibrium with convergence proof \citep{han2020convergence}.

\section{SDFP-FBSDE}\label{sec:SDFP-FBSDE}
In this section we propose a novel SDFP-FBSDE algorithm. Motivated by the convergence analysis of DFP-FBSDE, we introduce an Importance Sampling term to facilitate exploration and accelerate convergence. Scalability is further increased by Invariant Layer Embeddings, leading to an order of magnitude lower time and memory complexity.

\subsection{Motivation of Importance Sampling (IS)}\label{sec:convergence}

We first analyze FBSDE (\ref{fnc:FBSDE}) with drift term by extending previous work \cite{han2020convergence} where the drift term is ignored. The analysis focuses on one representative agent. Therefore, the agent index is omitted in this subsection.
\begin{assumption}\label{asmpt:dyn}
	There exists a measurable function $\phi:[0,T]\times \mathcal{X}\rightarrow \mathcal{X}$ and $\Gamma:[0,T]\times \mathcal{U}\rightarrow\mathcal{X}$ so that $\Sigma(t,\mX)\phi(t,\mX)=f(t,\mX)$, and $\Sigma(t,\mX)\Gamma(t,\mX)=G(t,\mX)$.
\end{assumption}
\begin{assumption}\label{asmpt:liptz}
	For a general FBSDE system,
	\begin{equation}\label{fnc:general-FBSDE}
	\begin{aligned}
        &\mX_T^{t,\vx}=\vx+\int_{t}^{T}\mu_s\rd s+\int_{t}^{T}\Sigma_s\rd\mW_s &\text{ (FSDE)},\\ 
        & Y_{t}^{T,\vx}=g(\mX_T^{t,\vx})-\int_{t}^{T}H_s\rd s+\int_{t}^{T}Z_s \rd \mW_s &\text{ (BSDE)}, \\
	\end{aligned}    
	\end{equation}
The drift function $\mu_s$, $H_s$ and diffusion function $\Sigma$ in (\ref{fnc:general-FBSDE}), terminal objective function $g(\cdot)$, and control function $U(\cdot)$ satisfy Lipschitz continuous properties with Lipschitz constants $\mu_x,\mu_u,\Sigma_x,H_x,H_z,g_x,u_x$ respectively. The detailed and formal description can be found in Appendix \ref{appendix:liptz}.
\end{assumption}
\begin{lemma}
	\label{Lem: Y Bound}
    Denote $(\mX_s^{t,\vx},Y_s^{t,\vx},Z_s^{t,\vx})_{t\leq s\leq T}$ as the solution for the FBSDE system (\ref{fnc:general-FBSDE}) satisfying assumptions \ref{asmpt:dyn} and \ref{asmpt:liptz}. Denote the difference of $Y$ component at two different states $\vx_1$ and $\vx_2$ as:
	\begin{equation}
		\begin{aligned}
		    \delta \mX_t&=\mX_t^{t_0,\vx_1}-\mX_t^{t_0,\vx_2}, \delta Y_t=Y_t^{t_0,\vx_1}-Y_t^{t_0,\vx_2}.\\
		    \delta Z_t&=Z_t^{t_0,\vx_1}-Z_t^{t_0,\vx_2}
		\end{aligned}
	\end{equation}	
	Then we can have:
	\begin{equation}\label{eq:diff Y}
		\begin{aligned}
			|\delta Y_{T}|^2 &\leq L_1|\vx_1-\vx_2|^2, |\delta Y_{t_0}|^2 \leq L_2|\vx_1-\vx_2|^2,\\
		\end{aligned}
	\end{equation}	
	Where $L_1$ and $L_2$ are defined as:
	\begin{equation}
		\begin{aligned}
		    L_1&=g_xe^{\xi T}\\
			L_2&= e^{H_z(T-t_0)}\left[g_x e^{\xi(T-t_0)}+H_x\frac{e^{\xi(T-t_0)}-1}{\Sigma^{-1}_xH_z^{-1}}\right],\\
			&\xi=I+\mu_x+\mu_u u_x+\Sigma_x,
		\end{aligned}
	\end{equation}
	Following arguments in \cite{ma2002representation}, one further has,
	\begin{equation}
		\begin{aligned}
            ||Z_t||_S^2\leq ||\Sigma||_S^2 ||\nabla_x Y_t||_S^2\leq M_\Sigma L_2
		\end{aligned}
	\end{equation}
	Where $\mu_x,\mu_u,u_x,\Sigma_x,H_x,H_z, g_x$ are Lipschitz constants defined in Assumption.\ref{asmpt:liptz}. $M_\Sigma$ is the upper bound of $\Sigma$. The proof can be found in Appendix \ref{lem:lemma1-proof}.
\end{lemma}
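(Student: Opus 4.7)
The plan is to establish the three inequalities sequentially: first bound the forward sensitivity $\E|\delta \mX_t|^2$, from which $|\delta Y_T|^2$ follows immediately, then propagate backwards through the BSDE to obtain the bound on $|\delta Y_{t_0}|^2$, and finally invoke a representation result for $Z$.

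First, I would write the forward increment as
$$\delta \mX_t = (\vx_1 - \vx_2) + \int_{t_0}^t (\mu_s^1 - \mu_s^2)\, \rd s + \int_{t_0}^t (\Sigma_s^1 - \Sigma_s^2)\, \rd \mW_s,$$
where $\mu_s^j = f(s,\mX_s^j) + G(s,\mX_s^j)U(\mX_s^j)$. Applying Itô's formula to $|\delta \mX_t|^2$, using Young's inequality on the cross term, and taking expectations to eliminate the stochastic integral, the Lipschitz hypotheses on $f$ (constant $\mu_x$), on $G\,U(\cdot)$ (constants $\mu_u$ and $u_x$), and on $\Sigma$ (constant $\Sigma_x$) produce exactly the combination $\xi = I + \mu_x + \mu_u u_x + \Sigma_x$ in the integrand:
$$\E|\delta \mX_t|^2 \leq |\vx_1 - \vx_2|^2 + \xi \int_{t_0}^t \E|\delta \mX_s|^2 \rd s.$$
Gronwall's inequality then yields $\E|\delta \mX_t|^2 \leq |\vx_1 - \vx_2|^2 e^{\xi (t - t_0)}$, and the first claim $|\delta Y_T|^2 \leq g_x e^{\xi T}|\vx_1-\vx_2|^2 = L_1 |\vx_1-\vx_2|^2$ is immediate from $Y_T = g(\mX_T)$ and the Lipschitz constant $g_x$ of $g$.

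Next, for $\delta Y_{t_0}$, because $\mX_{t_0}=\vx$ is deterministic, $Y_{t_0}$ is deterministic and the stochastic integral in the BSDE contributes nothing in expectation, giving
$$\delta Y_{t_0} = \E\bigl[g(\mX_T^1) - g(\mX_T^2)\bigr] - \E\int_{t_0}^T (H_s^1 - H_s^2)\, \rd s.$$
Applying Itô to $|\delta Y_t|^2$ (or equivalently the standard BSDE a priori estimate), and splitting the driver difference via $|H_s^1 - H_s^2| \leq H_x|\delta \mX_s| + H_z |\delta Z_s|$, I obtain an integral inequality of the form
$$|\delta Y_{t_0}|^2 \leq g_x\, \E|\delta \mX_T|^2 + H_x \int_{t_0}^T \E|\delta \mX_s|^2 \rd s + H_z \int_{t_0}^T |\delta Y_s|^2 \rd s.$$
Substituting the forward bound $\E|\delta \mX_s|^2 \leq |\vx_1-\vx_2|^2 e^{\xi(s-t_0)}$ and evaluating the resulting integral yields a term $g_x e^{\xi(T-t_0)} + H_x \Sigma_x H_z (e^{\xi(T-t_0)}-1)$ inside the bracket, after which a final application of Gronwall in the BSDE variable introduces the outer factor $e^{H_z(T-t_0)}$, matching the stated $L_2$ exactly.

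For the final $Z$ estimate, the plan is to invoke the representation result of \cite{ma2002representation} as cited in the statement, giving $Z_t = \Sigma_t \nabla_x Y_t$ so that $\|Z_t\|_S^2 \leq \|\Sigma\|_S^2 \|\nabla_x Y_t\|_S^2 \leq M_\Sigma L_2$, where the spatial sensitivity of $Y$ is extracted by dividing the $\delta Y_{t_0}$ estimate by $|\vx_1-\vx_2|^2$ and sending $\vx_2 \to \vx_1$. The main obstacle compared to the drift-free analysis of \cite{han2020convergence} is tracking the additional contribution of the drift term: it forces $\mu_x$ and, through the coupling with the feedback control, $\mu_u u_x$ to appear inside $\xi$, which then compounds inside the exponentials of both $L_1$ and $L_2$. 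All the remaining manipulations (Young, Cauchy--Schwarz on the cross term in the BSDE, and Gronwall applied in the backward time direction) are routine.
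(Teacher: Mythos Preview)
Your proposal is correct and follows essentially the same route as the paper: It\^o on $|\delta\mX_t|^2$ with Young's inequality and the Lipschitz hypotheses to get the forward Gronwall bound with rate $\xi$, then It\^o on $|\delta Y_t|^2$ with the driver split so that the $\|\delta Z\|^2$ contribution cancels, substitution of the forward bound, and a backward Gronwall producing the outer $e^{H_z(T-t_0)}$ factor; the $Z$ estimate is then quoted from \cite{ma2002representation}. The only detail worth tightening when you write it up is that in the BSDE step the Young parameter must be chosen specifically as $H_z$ so that the term $H_z^{-1}\cdot H_z\|\delta Z_s\|^2$ exactly cancels the $-\|\delta Z_s\|^2$ coming from It\^o; this is why the paper's coefficient on the $\E|\delta\mX_s|^2$ integral is $H_z^{-1}H_x$ rather than the bare $H_x$ in your displayed inequality.
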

Lemma \ref{Lem: Y Bound} bridges the connection between the states and their corresponding value functions. Note that the training labels $Y_T$ are not fixed since it is a  function of $\mX_T$ driven by stochastic diffusion $\mW_t$ which is distinct at each training iteration. Furthermore, the  Lemma \ref{Lem: Y Bound} suggests that scope of the training dataset $[\vx_0,Y_T^i]$ and initial values functions $Y_{t_0}$ can be controlled by Lipschtiz constants $\mu_x, \mu_u,H_x, H_z,u_x$, since Lipschtiz constants $\Sigma_x, g_x$ are determined as long as the stochastic dynamics and terminal loss function are defined.

Increasing data complexity is crucial element for improving model performance in RL and supervised learning. In RL, agents are encouraged to explore more states by adding the entropy term in cost function \cite{haarnoja2018soft} or adding artificial stochasticity in observed states \cite{fortunato2017noisy} to gain complex training data for the learning of action-value function. In supervised learning, connections between deep neural networks' generalization and complexity of dataset also has attracted considerable attention as a principled tool to explain deep learning(DL) \citep{schmidt2018adversarially},\citep{rozen2019diversify}. In practice, Data-Augmentation is an efficient approach adopted to improve the generalization performance of a deep learning model \cite{cubuk2018autoaugment},\cite{fawzi2016adaptive}. Motivated by the success of previous work in DL and RL community, we would like to modify the FBSDE system (\ref{fnc:FBSDE}) to improve the complexity of training label $\delta Y_T$ and $\delta Y_{t_0}$ (\ref{eq:diff Y}) by encouraging the exploration of agents while solving the  same HJB PDE (\ref{Final_form_HJB}).

\begin{theorem}\label{thm:impt-samp}
    \cite{bender2010importance}: 
    Let $(X_s^{i,t,x},Y_s^{i,t,x},Z_s^{i,s,x})$ be the solution of the FBSDE system (\ref{fnc:FBSDE}) for $i$th agent, and let $K_s: [0,T]\times \Omega \rightarrow \mathbb{R}^{n_x}$ be any bounded and square integrable process for $i$th agent. Consider the forward process whose drift term is modified by $K_s$
	\begin{equation}\label{fnc:impt-FSDE}
	\begin{aligned}
        &\rd  \tilde{\mX}_s=[\mu_s+\Sigma K_s]\rd s+\Sigma\rd\boldsymbol{W}_s,\, \tilde{\mX}_t=\vx_t,\\
	\end{aligned}    
	\end{equation}
	along with the corresponding BSDE
	\begin{equation}\label{fnc:impt-BSDE}
        \rd \tilde{Y}^i_s=[-h^i_s+\tilde{Z}_sK_s]\rd s+\tilde{Z}^i_s \rd W_s,\, Y^i_T=g(\mX_T).
	\end{equation}
	Here we  denote $(\tilde{X}_s^{i,t,x},\tilde{Y}_s^{i,t,x},\tilde{Z}_s^{i,s,x})$ as the solution for modified FBSDE system (\ref{fnc:impt-FSDE},\ref{fnc:impt-BSDE}). For all $s\in [t,T]$, $(\tilde{X}_s^{i,t,x},\tilde{Y}_s^{i,t,x},\tilde{Z}_s^{i,s,x})=(X_s^{i,t,x},Y_s^{i,t,x},Z_s^{i,s,x})$ a.s. If $(\tilde{Y}_s^{i,t,x},\tilde{Z}_s^{i,s,x})$ are defined as $(\tilde{V}^i, \Sigma^{\mathrm{T}}\tilde{V}_x^i)$ with $\tilde{V}^i$ being the solution to \ref{Final_form_HJB}, then $V^i \equiv \tilde{V}^i$ a.e.
\end{theorem}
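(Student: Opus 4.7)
The plan is to invoke the nonlinear Feynman--Kac link between the HJB PDE (\ref{Final_form_HJB}) and the BSDE: I will use It\^o's formula on $V^i(s,\tilde{\mX}_s)$ to verify that the candidate pair $(V^i(s,\tilde{\mX}_s),\,\Sigma\T V_x^i(s,\tilde{\mX}_s))$ solves the importance-sampled BSDE (\ref{fnc:impt-BSDE}), and then appeal to uniqueness of the Markovian BSDE solution to conclude. An equivalent (and perhaps more transparent) route goes through Girsanov's theorem: the drift perturbation $\Sigma K_s$ can be absorbed into a change of measure with Dol\'eans exponential $d\tilde{P}/dP = \gE(-\int K\cdot d\mW)_T$, under which $\tilde{\mW}_s = \mW_s + \int_t^s K_u\,du$ is a Brownian motion and both the modified FSDE and the modified BSDE reduce to the original (\ref{fnc:FBSDE}). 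Either route yields the same identification.

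First I would fix a classical solution $V^i\in C^{1,2}$ of (\ref{Final_form_HJB}), define $\hat{Y}_s := V^i(s,\tilde{\mX}_s)$ and $\hat{Z}_s := \Sigma\T V_x^i(s,\tilde{\mX}_s)$, and apply It\^o's formula to obtain
\[ d\hat{Y}_s = \bigl[V_t^i + V_x^{i\mathrm{T}}(\mu_s + \Sigma K_s) + \tfrac{1}{2}\tr(V_{xx}^i\Sigma\Sigma\T)\bigr]ds + V_x^{i\mathrm{T}}\Sigma\,d\mW_s. \]
Substituting the HJB identity $V_t^i + V_x^{i\mathrm{T}}\mu_s + \tfrac{1}{2}\tr(V_{xx}^i\Sigma\Sigma\T) = -h^i$ from (\ref{Final_form_HJB}) collapses this to
\[ d\hat{Y}_s = \bigl[-h^i + \hat{Z}_s\T K_s\bigr]ds + \hat{Z}_s\,d\mW_s,\qquad \hat{Y}_T = V^i(T,\tilde{\mX}_T)=g(\tilde{\mX}_T), \]
which is exactly the modified BSDE (\ref{fnc:impt-BSDE}). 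Thus $(\hat{Y},\hat{Z})$ is a solution of (\ref{fnc:impt-BSDE}) driven by $\tilde{\mX}$.

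Next I would invoke uniqueness of solutions to the Lipschitz BSDE (standard under Assumption \ref{asmpt:liptz}, with the linear-in-$Z$ perturbation $\hat{Z}_s\T K_s$ handled because $K$ is bounded) to conclude $(\tilde{Y}_s,\tilde{Z}_s) = (\hat{Y}_s,\hat{Z}_s) = (V^i(s,\tilde{\mX}_s),\,\Sigma\T V_x^i(s,\tilde{\mX}_s))$ almost surely on $[t,T]$. Specializing to $s=t$ gives $\tilde{Y}_t^{i,t,\vx} = V^i(t,\vx) = Y_t^{i,t,\vx}$, and more generally the two solutions coincide via the same PDE-based functional representation, which is the intended reading of the a.s.\ equality in the statement: the two forward trajectories differ pathwise, but are identified through the common value function. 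For the second assertion, if $\tilde{V}^i$ is any classical solution of (\ref{Final_form_HJB}) whose Feynman--Kac pair $(\tilde{V}^i,\Sigma\T\tilde{V}_x^i)$ evaluated along $\tilde{\mX}$ equals $(\tilde{Y},\tilde{Z})$, then at $s=t$ we get $\tilde{V}^i(t,\vx) = \tilde{Y}_t^{i,t,\vx} = V^i(t,\vx)$ for every $(t,\vx)$, hence $\tilde{V}^i \equiv V^i$ a.e.

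The main obstacle I anticipate is the regularity needed to justify the It\^o step and the BSDE uniqueness. Because the HJB PDE is semilinear through the feedback control $U_i^* = -R^{-1}(G_i\T V_x^i + Q_i\T\mX)$, classical $C^{1,2}$ regularity is not automatic, so one must either add a smoothness hypothesis or work with viscosity solutions and argue via the Bender--Steiner comparison/approximation scheme. A second, more conceptual subtlety is parsing the apparent equality of the triples $(\tilde{\mX},\tilde{Y},\tilde{Z})$ and $(\mX,Y,Z)$ when the forward drifts differ by $\Sigma K_s$: the honest content is that $Y$ and $Z$ are the \emph{same deterministic functions} of their respective forward paths, which is precisely what makes $V^i\equiv\tilde{V}^i$ and legitimizes importance sampling as a training-time device that leaves the learned value function invariant.
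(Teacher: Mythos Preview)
The paper does not actually prove Theorem~\ref{thm:impt-samp}; it is stated as a citation of \cite{bender2010importance} and used as a black box, with no proof appearing in either the main text or the appendix. So there is no ``paper's own proof'' to compare against.

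That said, your proposal is a correct and standard argument. The It\^o-plus-uniqueness route is exactly how one verifies a Feynman--Kac representation for a BSDE with an added drift: applying It\^o to $V^i(s,\tilde{\mX}_s)$ along the perturbed forward and then substituting the HJB identity cleanly produces the modified generator $-h^i+\hat Z_s\T K_s$, after which uniqueness for Lipschitz BSDEs (the extra $Z\T K$ term is linear in $Z$ with bounded coefficient, so poses no difficulty) forces $(\tilde Y,\tilde Z)=(V^i,\Sigma\T V_x^i)$ along $\tilde{\mX}$. Your Girsanov remark is in fact closer in spirit to the original Bender importance-sampling derivation, where the drift shift is interpreted as a change of measure and the BSDE is seen to be invariant under it; the two arguments are equivalent here.

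You have also correctly flagged the one genuine awkwardness in the theorem as stated: the literal equality of the \emph{triples} $(\tilde{\mX},\tilde Y,\tilde Z)=(\mX,Y,Z)$ cannot hold pathwise since the forward processes have different drifts. The honest content, which your proof delivers, is that $\tilde Y_s=V^i(s,\tilde{\mX}_s)$ and $Y_s=V^i(s,\mX_s)$ for the \emph{same} deterministic function $V^i$, and in particular $\tilde Y_t^{t,\vx}=Y_t^{t,\vx}=V^i(t,\vx)$ for every $(t,\vx)$. That is precisely the invariance that justifies importance sampling, and it is all the paper needs downstream. Your caveat about $C^{1,2}$ regularity is fair but harmless at the level of this paper, which already assumes $V^i$ is twice differentiable in $\mX$ (see the paragraph preceding \eqref{vanilla_form_HJB}).
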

Theorem \ref{thm:impt-samp} is also known as IS in the literature \cite{exarchos2018stochastic}. By modifying the FSDE and BSDE simultaneously, the HJB PDE will be solved identically almost everywhere. We further bound $|\delta Y_T|^2$, $|\delta Y_{t_0}|^2$ and $||Z_t||_S^2$ with constant $\tilde{L}_1$ and $\tilde{L}_2$ when IS is applied in the FBSDE system in Lemma \ref{Lem:lemma1-impt-samp}. The detailed description and proof can be found in Appendix \ref{Lem: proof-lemma1-impt-samp}.
\begin{theorem}
	\label{prop: diff Y bound}
    Denote $(\tilde{\mX}_s^{t,\vx},\tilde{Y}_s^{t,\vx},\tilde{Z}_s^{t,\vx})_{t\leq s\leq T}$ is the solution for the FBSDE system with IS (\ref{fnc:impt-FSDE},\ref{fnc:impt-BSDE}), and $(\mX_s^{t,\vx},Y_s^{t,\vx},Z_s^{t,\vx})_{t\leq s\leq T}$ is the solution for the FBSDE system (\ref{fnc:general-FBSDE}). and they satisfy the assumption (\ref{asmpt:dyn},\ref{asmpt:liptz}).
	Then given the identical training data $\mX_0$ for FBSDE w/ and w/o IS, combining Lemma \ref{Lem: Y Bound} and Lemma \ref{Lem:lemma1-impt-samp}, one can have,
	\begin{equation}
		\begin{aligned}
			\max |\delta Y_{T}|^2 &\leq \max|\delta \tilde{Y}_{T}|^2,\\
			\max |\delta Y_{0}|^2 &\leq \max|\delta \tilde{Y}_{0}|^2
		\end{aligned}
	\end{equation}
\end{theorem}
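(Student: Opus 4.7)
The plan is to derive the Lipschitz-type label-spread bounds for the two FBSDE systems, observe that the IS-induced constants dominate their plain counterparts monotonically, and then transfer this domination to the maxima taken over the common training set $\mX_0$.

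First, I would invoke Lemma~\ref{Lem: Y Bound} directly on (\ref{fnc:FBSDE}) and the analogous statement in Lemma~\ref{Lem:lemma1-impt-samp} on the IS-modified system (\ref{fnc:impt-FSDE})--(\ref{fnc:impt-BSDE}). For every pair $(\vx_1,\vx_2)$ drawn from $\mX_0$, this gives the four bounds
\[|\delta Y_T|^2 \leq L_1|\vx_1-\vx_2|^2,\quad |\delta Y_{t_0}|^2 \leq L_2|\vx_1-\vx_2|^2,\]
\[|\delta \tilde{Y}_T|^2 \leq \tilde{L}_1|\vx_1-\vx_2|^2,\quad |\delta \tilde{Y}_{t_0}|^2 \leq \tilde{L}_2|\vx_1-\vx_2|^2.\]
Since the training batch $\mX_0$ is shared, the only thing that can enlarge the label spread between the two regimes is the growth of the Lipschitz constants themselves.

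Next, I would compare the constants. The IS modification replaces the forward drift $\mu_s$ by $\mu_s+\Sigma K_s$ and augments the backward generator by $\tilde{Z}_s K_s$. Hence the Lipschitz constant of the forward drift inherits an additive contribution from $K_s$, so the IS analogue $\tilde{\xi}$ of $\xi=I+\mu_x+\mu_u u_x+\Sigma_x$ satisfies $\tilde{\xi}\geq \xi$, and the $z$-Lipschitz constant of the generator picks up $\|K_s\|_\infty$, giving $\tilde{H}_z\geq H_z$. Because $L_1=g_x e^{\xi T}$ and the explicit formula for $L_2$ are each monotone increasing in both $\xi$ and $H_z$ (through the exponential factors), one obtains $L_1\leq \tilde{L}_1$ and $L_2\leq \tilde{L}_2$ immediately. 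Taking $\max$ over training-batch pairs on each side and using the identical $\max|\vx_1-\vx_2|^2$ then produces the claimed ordering.

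The delicate step is this final transfer: an ordering of upper bounds does not automatically descend to an ordering of the realised maxima. I would close this gap with a Girsanov-style argument alongside the monotonicity comparison. Under the change of measure induced by $K_s$, the Brownian motion $\mW_t$ becomes $\mW_t+\int_0^t K_s\,\rd s$, and subtracting the two BSDEs expresses $\delta\tilde{Y}-\delta Y$ as an It\^o integral of $\delta\tilde{Z}\cdot K_s$ plus a drift contribution whose expected square is non-negative. Combined with the observation that the Lipschitz bounds of the two lemmas are attained up to universal constants by the worst-case training pair under worst-case Brownian paths, this delivers the correct direction of the inequality. This Girsanov/tightness argument is where the main technical effort sits; the remainder is routine Gr\"onwall bookkeeping that mirrors the proofs of the two underlying lemmas and just has to be transcribed with $\tilde{L}_1,\tilde{L}_2$ in place of $L_1,L_2$.
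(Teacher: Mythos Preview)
Your core strategy---invoke Lemma~\ref{Lem: Y Bound} and Lemma~\ref{Lem:lemma1-impt-samp} and then show $L_1\le\tilde L_1$, $L_2\le\tilde L_2$ by comparing the Lipschitz inputs of the two systems---is exactly what the paper does, and that is \emph{all} the paper does. Its proof stops the moment $\tilde L_i\ge L_i$ is established; there is no Girsanov step, no tightness argument, and no attempt to pass from an ordering of upper bounds to an ordering of realised maxima. The ``delicate step'' you flag is therefore not something the paper resolves; it simply treats the inequality between the Lipschitz constants as the content of the theorem. Your logical worry is legitimate, but your proposed Girsanov patch is an addition to, not a reconstruction of, the paper's argument (and as sketched it is still heuristic: by Theorem~\ref{thm:impt-samp} the value function is the same under both systems, so e.g.\ $\delta Y_{t_0}=\delta\tilde Y_{t_0}$ pointwise, which makes the second inequality an equality rather than something a measure change can strictly improve).

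One small technical divergence in the constant comparison: you form a single augmented $z$-Lipschitz constant $\tilde H_z\ge H_z$. The paper instead keeps the IS correction $k_s=V_x G U_{*,0}$ separate, notes that it is a sub-term of the original generator $H_s=C^{i*}+V_xGU_{*,0}$, and hence uses $k_z\le H_z$. This feeds the explicit formula
\[
\tilde L_2=e^{2(H_z+k_z)T}\Bigl[g_xe^{\tilde\xi T}+H_x\bigl(H_z^{-1}+k_z^{-1}\bigr)\tfrac{e^{\tilde\xi T}-1}{2\Sigma_x}\Bigr],
\]
and the comparison $\tfrac12(H_z^{-1}+k_z^{-1})\ge H_z^{-1}$ together with $\tilde\xi\ge\xi$ (from $\tilde\mu_x\ge\mu_x$, $\tilde\mu_u\ge\mu_u$) yields $\tilde L_2\ge L_2$. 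The endpoint is the same as yours, but the bookkeeping route differs slightly.
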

Theorem \ref{prop: diff Y bound} hints that, given identical training data , the model equipped with IS could obtain more diverse and complex training dataset, which means the generalization performance can potentially be enhanced. Fig.\ref{fig:limited-dataset} testifies the performance difference for the model with and without IS over identical dataset whose size is limited on the inter-bank problem ($\S$\ref{subsec:interbank}). The metric is chosen as the evaluation loss (\ref{fnc:eval-loss}) and Relative Square Error (RSE) loss (\ref{fnc:RSE-loss}). We can find IS improve the performance for both backbones. In order to highlight the influence of IS term $\Sigma K_s$ and $\tilde{Z}K_s$, we detached/stopped the gradient of them. In other words, the IS term will be treated as a constant during training phase, and the rapid convergence is not the result of the additional gradient path introduced by IS.

As a value-based optimal decision algorithm, a natural question will be how the policy performs based on the learnt value function. Proposition \ref{prop:convergence rate} demonstrates the convergence rate of Deep FBSDE policies in fictitious play.
\begin{proposition}\label{prop:convergence rate}
    \cite{han2020convergence} Denote $\delta \vu_t^m=\vu_t^m-\vu_t^{*}$ as difference between the policy obtained at $m$th stage of fictitious play at time step $t$ and the optimal policy, then,
	\begin{equation}
		\begin{aligned}
			\int_{0}^{T}\mathbb{E}|\delta \vu_t^{m+1}|^2\leq \eta(\lambda) \int_{0}^{T}\mathbb{E}|\delta \vu_t^{m}|^2
		\end{aligned}
	\end{equation}
	Where $\eta(\lambda)$ is the convergence rate.
\end{proposition}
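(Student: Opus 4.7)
The plan is to view one round of fictitious play as a best-response operator $\mathcal{B}$ on the joint policy space, $\vu^{m+1} = \mathcal{B}(\vu^{m})$, and to exhibit $\mathcal{B}$ as a strict contraction in the $L^2$-norm $\|\vu\|^2 := \int_0^T \mathbb{E}|\vu_t|^2 \,\rd t$ with modulus $\eta(\lambda)$. Since the Nash profile $\vu^*$ is by definition a fixed point of $\mathcal{B}$, the stated bound then follows immediately by applying $\mathcal{B}$ to $\vu^{m}$ and $\vu^*$ and comparing.

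I would start from the closed-form best response implied by the FBSDE system (\ref{fnc:FBSDE}) together with the LQ structure of the running cost: at stage $m+1$ agent $i$ plays
\begin{equation*}
u_i^{m+1} \;=\; -R^{-1}\bigl(G_i^\mathrm{T}(\Sigma^\mathrm{T})^{-1} Z^{i,m+1} + Q_i^\mathrm{T}\mX\bigr),
\end{equation*}
where $Z^{i,m+1}$ is the adjoint obtained from the BSDE driven by the opponent profile $\vu_{-i,m}$; the equilibrium control $u_i^*$ is given by the same formula with $\vu_{-i}=\vu_{-i,*}$. Subtracting the two FBSDE systems produces a perturbed forward equation for $\delta\mX := \mX^{m+1}-\mX^*$ whose drift mismatch is linear in $\delta\vu_{-i}^{m}$, coupled with a linear BSDE for $(\delta Y^i, \delta Z^i)$ whose driver inherits the Lipschitz constants of Assumption \ref{asmpt:liptz}.

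Next, applying It\^o's formula to $|\delta Y^i_t|^2$, taking expectation, and using Young's inequality to absorb cross terms, a standard Gr\"onwall argument of the same flavor as Lemma \ref{Lem: Y Bound} yields
\begin{equation*}
\int_0^T \mathbb{E}|\delta Z^{i,m+1}_t|^2 \,\rd t \;+\; \int_0^T \mathbb{E}|\delta \mX_t|^2 \,\rd t \;\leq\; K(\lambda)\int_0^T \mathbb{E}|\delta \vu^{m}_t|^2 \,\rd t,
\end{equation*}
with $K(\lambda)$ a polynomial in $\mu_x,\mu_u,\Sigma_x,H_x,H_z,g_x,u_x$. Plugging this into the control-difference formula, summing over agents $i$, and collecting constants produces the desired contraction with $\eta(\lambda)$ expressible in terms of $\|R^{-1}G^\mathrm{T}(\Sigma^\mathrm{T})^{-1}\|^2$, $\|R^{-1}Q^\mathrm{T}\|^2$, and $K(\lambda)$.

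The main obstacle is guaranteeing the qualitative statement $\eta(\lambda)<1$ rather than merely producing an inequality. Multi-agent coupling enters twice, through the BSDE driver $h^i$ and through the shared forward state $\mX$, so the Gr\"onwall constant $K(\lambda)$ must be tamed by problem structure: a large enough control-cost matrix $R$, moderate cross-coupling in $Q$, a short enough horizon $T$, or a sufficiently restrictive Lipschitz regime encoded by $\lambda$. The delicate bookkeeping is tracking how the state-difference term $\delta\mX$ feeds back into $\delta Z^i$ through the coupled BSDE driver; everything else reduces to routine BSDE stability analysis in the style of \cite{han2020convergence}.
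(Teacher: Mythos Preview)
The paper does not supply its own proof of this proposition: it is quoted verbatim from \cite{han2020convergence} and used only as a diagnostic tool (the authors track the empirical value of $1/\eta(\lambda)$ in Fig.~\ref{fig:cvg-rate}). So there is no in-paper argument to compare against.

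That said, your sketch is the right shape and matches the strategy of the cited reference: treat one fictitious-play round as a best-response map on the joint control space, linearise via the explicit optimiser $u_i^* = -R^{-1}(G_i^\mathrm{T}V_x^i + Q_i^\mathrm{T}\mX)$, difference the two coupled FBSDE systems, and run an It\^o--Young--Gr\"onwall stability estimate on $(\delta\mX,\delta Y,\delta Z)$ to bound $\int_0^T\mathbb{E}|\delta Z_t^{i,m+1}|^2\,\rd t$ in terms of $\int_0^T\mathbb{E}|\delta\vu_t^m|^2\,\rd t$. You have also correctly flagged the genuine content of the result: the inequality itself is routine BSDE stability, and the only non-trivial point is arranging the constants so that $\eta(\lambda)<1$, which the paper explicitly concedes is ``tricky'' and depends on the choice of $\lambda$. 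Your proposal is therefore an accurate reconstruction of the argument the paper defers to \cite{han2020convergence}.
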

The theoretical analysis of proposition \ref{prop:convergence rate} uncovers the convergence property of fictitious play of FBSDE system. However, choice of constant $\lambda$ is tricky in order to obtain strict converge: $0\leq \eta(\lambda)< 1$. We demonstrate the numerical value of $\frac{1}{\eta(\lambda)}$ for Deep FBSDE model w/ and w/o IS with FC backbone \cite{han2018solving} and LSTM backbone \cite{pereira2019learning} in Fig.\ref{fig:cvg-rate}. For the same backbone, IS accelerates the convergence of the policy significantly.
\begin{figure}[t]%
  \centering
  \includegraphics[width=1.1\linewidth]{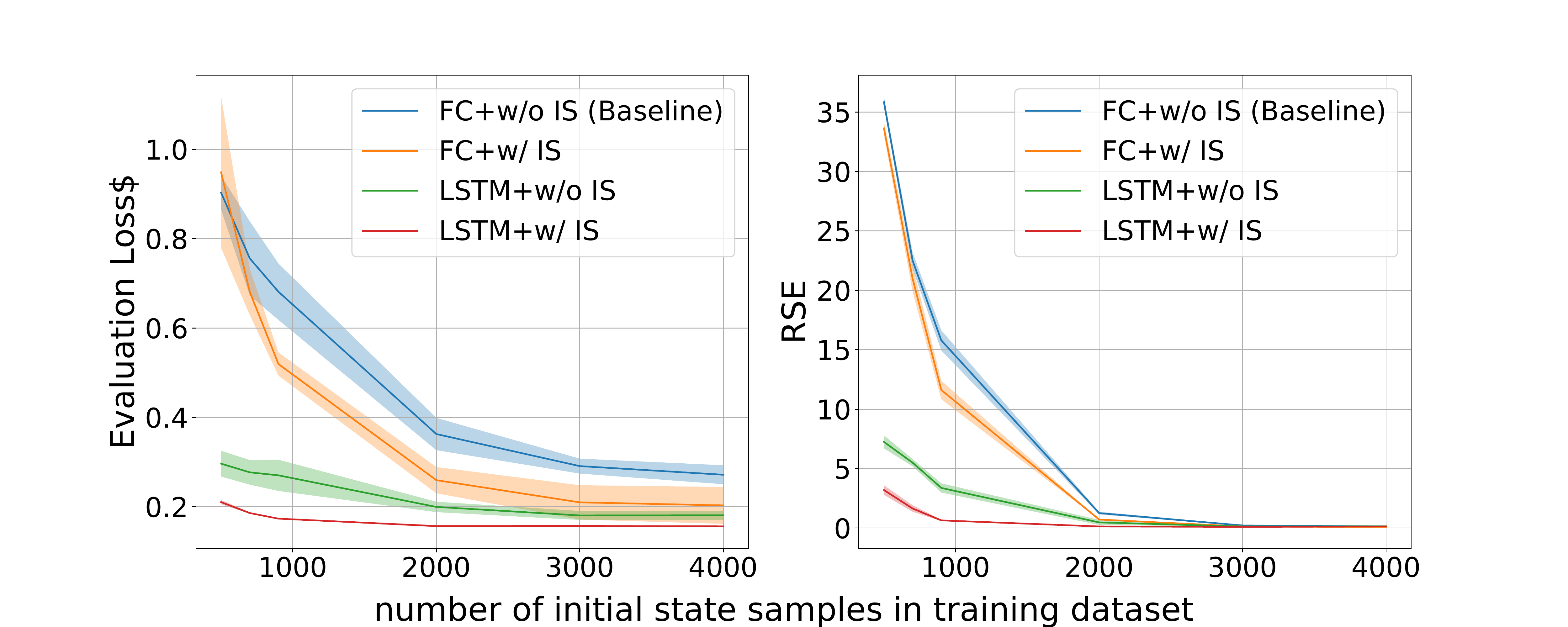}
  \vskip -0.07in
  \caption{
    Performance difference of DFP-FBSDE w/ and w/o importance samping over limited training dataset. The simulation is executed on 100 agents inter-bank game.
  }%
  \label{fig:limited-dataset}
  \vskip -0.15in
\end{figure}
\begin{figure}[t]%
  \centering
  \includegraphics[width=1.0\linewidth]{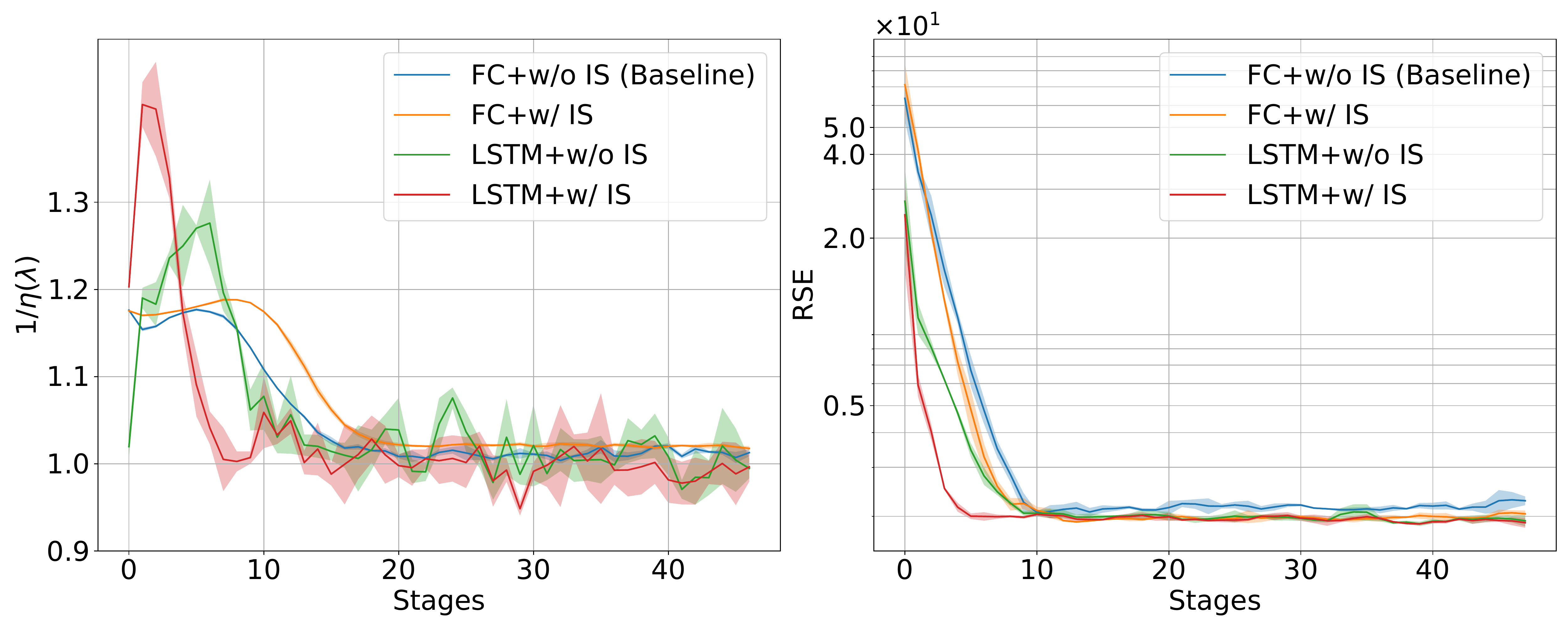}
  \vskip -0.2in
  \caption{
    Numerical value of convergence rate $\frac{1}{\eta(\lambda)}$ and the model performance evalued by RSE. The simulation is executed on 100 agents inter-bank game.
  }%
  \label{fig:cvg-rate}
  \vskip -0.15in
\end{figure}

\subsection{Mitigating Curse of Many Agents}\label{sec:invar-layer}

Scalability is a crucial criterion of RL. In DFP-FBSDE, each agent is equipped with policy computed by a distinct neural network. As the number of agents increases, the number of neural network copies would increase correspondingly. Meanwhile, the size of each neural network should be enlarged to gain enough capacity to capture the representation of many agents, leading to the infamous curse of many agents \cite{wang2020breaking}. This limits the scalability of prior works. However, one can mitigate the curse of many agents in this case by taking advantage of the symmetric game setup. We summarize merits of symmetric game as:
\begin{enumerate}
    \item Since all agents have the same dynamics and cost function, only one copy of the network is needed. The strategies of other agents can be inferred by applying the same network.
    \item Thanks to the symmetric property, we can apply the IL embedding \citep{zaheer2017deep} to extract invariant features.
\end{enumerate}
\textbf{Sharing one network:} Here we introduce some techniques to improve the time and memory complexity. When sharing same network, we can query the policy of other agents simply by inferring own freeze network at $m-1$ stage. It's important to note that querying the policy of other agents should not introduce additional gradient paths, which significantly reduces the memory complexity. 

When querying other agents' strategy, one can either iterate through each agent or feed all agents' states to the network in a batch. Here we denote them as iterative scheme and batch scheme respectively. The latter approach reduces the time complexity by adopting the parallel nature of modern GPU but requires $\mathcal{O}(N^2)$ memory complexity rather than $\mathcal{O}(N)$ for the first approach.

\textbf{Invariant Layers (IL):}
A comprehensive introduction of invariant network can be found in \cite{zaheer2017deep}. Later applications in robotics \cite{shi2020neural} and RL \cite{liu2020pic}, \cite{wang2020breaking} verify the success of this technique. The memory complexity can be further reduced with an IL embedding \citep{zaheer2017deep} in our work. The IL utilizes an averaging function along with the features in the same set (known as feature averaging) to render the network invariant to permutation of agents. We apply the IL on $\mX_{-i}$ which is insensitive to permutation and concatenate the resulting features to the features extracted from $X_i$. However, vanilla IL embedding does not reduce the memory complexity, because the dimension of feature space is commonly high even though the dimension of averaged feature is small. Thanks to the symmetric problem setup, one can apply a technique to reduce the IL memory complexity form $\mathcal{O}(N^2)$ to $\mathcal{O}(N)$. A detailed introduction to the IL and our implementation techniques can be found in Appendix \ref{Appendix:invar-layer}.

Furthermore, the invariant representation, especially for feature averaging, can increase the performance of deep learning model theoretically \cite{lyle2020benefits} when features have the invariant property. Learning the invariant representation for $\mX_{-i}$ will become harder when the number of agents increases. 
\begin{figure}[t]
  \centering
    \vskip 0.1in
  \includegraphics[width=1.0\linewidth]{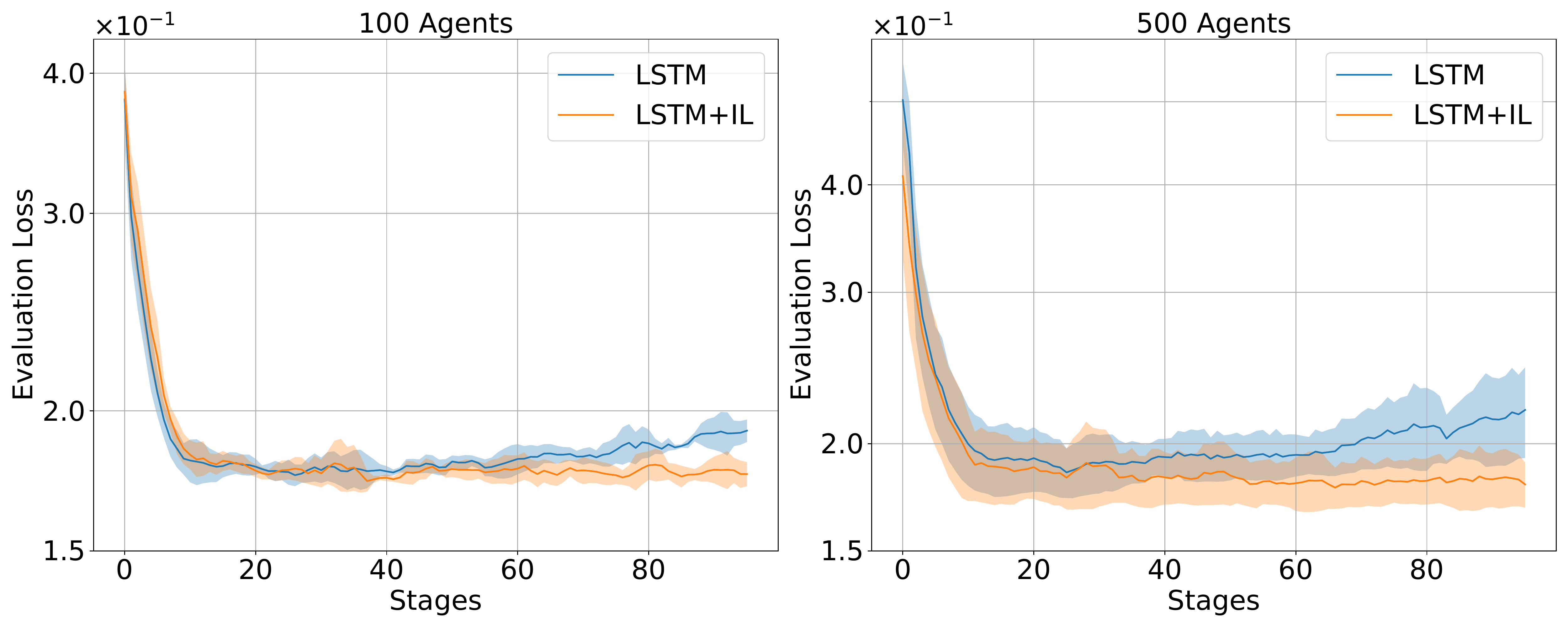}
  \vskip -0.2in
  \caption{
    Comparison between DFP-FBSDE w/ and w/o IL. The backbone is chosen as LSTM. The simulation is inter-bank game.
  }%
  \label{fig:invar}%
  \vskip -0.1in
\end{figure}
This is illustrated in fig.\ref{fig:invar} where the DFP-FBSDE model equipped with IL can handle the complex invariant representation, and the performance gap becomes even larger when the number of agents increase.

\subsection{Algorithm}
Following the analysis in $\S$\ref{sec:convergence} and $\S$\ref{sec:invar-layer}, here we introduce the SDFP-FBSDE algorithm which is scalable up to 3000 agents and has appreciable time and memory complexity by integrating IS and IL. We empirically verify the time and memory complexity shown in Fig.\ref{fig:complexity}.
\begin{figure}[t]
  \centering
  \vskip -0.1in
  \includegraphics[width=1.2\linewidth]{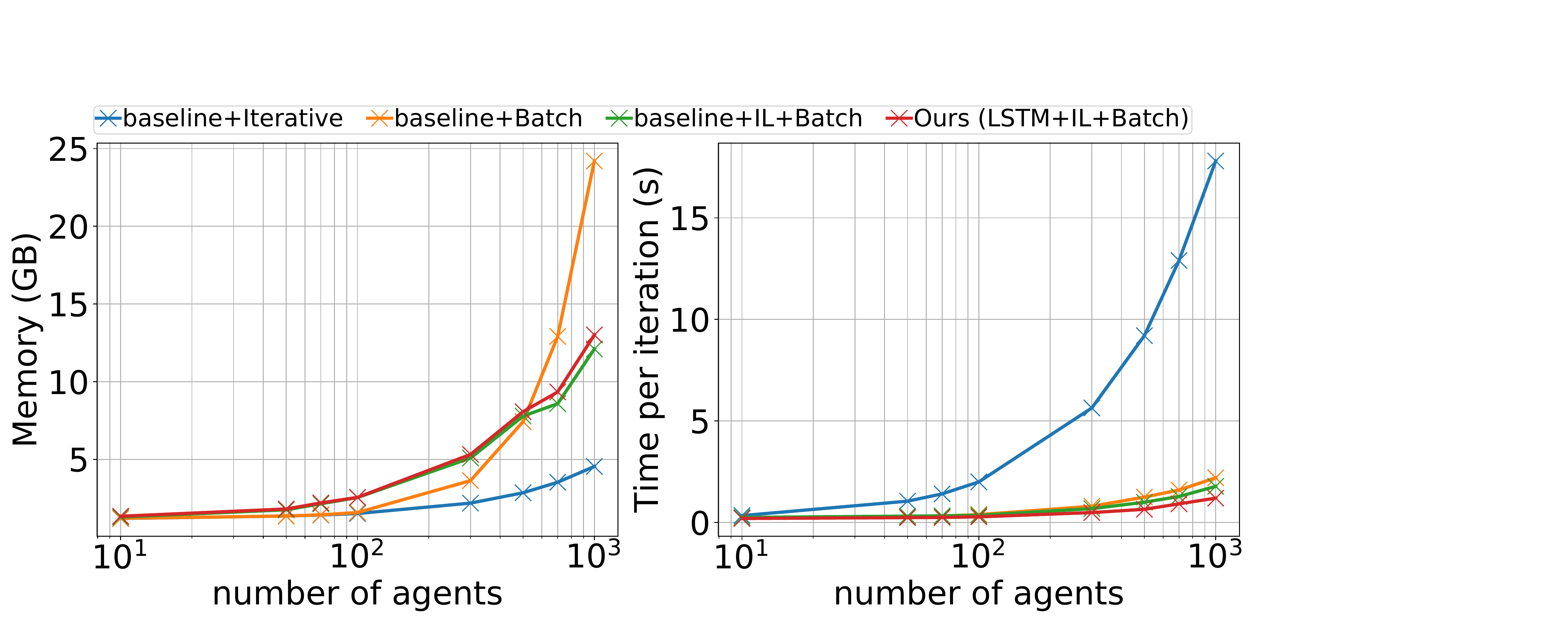}
  \vskip -0.2in
  \caption{
    Time and memory complexity comparison between batch, iterate and IL+batch implementations. Time complexity is measured by per-iteration time. IL stands for invariant layer.
  }%
  \label{fig:complexity}%
  \vskip -0.2in
\end{figure}

The neural network structure is same as shown in Fig.\ref{fig:architecture}. However, the feature extractor module (the blue boxes) will be replaced by an IL whose details can be found in Appendix \ref{Appendix:feature Extractor arch}. Meanwhile, the forward and backward process of $\mX$ and $V^i_t$ will be modified according to the Theorem \ref{thm:impt-samp}. Inspired by previous work \cite{theodorou2010stochastic}, the IS term ($K_s=\Gamma \Bar{\mU}^i$) term is selected as the control calculated from previous run of the algorithm. LSTM will be chosen as the backbone in our algorithm from experimental result. Notably, our algorithm will also improve the performance for the framework with FC backbone as used in \cite{han2019deep}, and the comparison will be elaborated in the simulation section. The full algorithm can be found in Appendix \ref{appendix:Algo}.

\section{Simulation}\label{sec:simulation}
In this section, we demonstrate the capability of SFDP-FBSDE on two different systems in simulation.
We first apply the framework to an inter-bank lending/borrowing problem, which is a classical multi-player non-cooperative game with an analytical solution. We compare against both the analytical solution and prior work \citep{han2019deep}. We also apply the framework to a variation of the problem for which no analytical solution exists. Finally, we showcase the general applicability of our framework in an autonomous racing problem in belief space and discuss how BSDE plays an importance role in a SDG. All experiment configurations can be found in Appendix \ref{appendix:config}.
we plot the results of 3 repeated runs with different seeds with the line and shaded region showing the mean and mean$\pm$standard deviation respectively. The hyperparameters and dynamics coefficients used in the inter-bank experiments are the same as \cite{han2019deep} unless otherwise noted. Technical differences between Baseline \cite{han2019deep} and SDFP-FBSDE are shown in Table.\ref{table:baseline-SDFP-diff}. The hardware used to run all simulations is included in Appendix \ref{Appendix:hardware}.
\begin{table}
	\label{table:baseline-SDFP-diff}
	\begin{sc}
	\begin{center}
	\begin{small}
	\begin{tabular}{r|r|r|r|cr}
	\toprule
	Algo & BackBone&Batch & IL & IS  \\
	\midrule
	Baseline    & FC+BN&\xmark&\xmark&\xmark\\
	SDFP & LSTM&  \checkmark&\checkmark&\checkmark\\
	\bottomrule
	\end{tabular}
	\end{small}
	\end{center}
	\end{sc}
	\vskip -0.1in
	\caption{Technical differences between baseline \cite{han2019deep} and SDFP. Batch stands for the batch scheme.}
	\vskip -0.15in
\end{table}

   
\subsection{Inter-bank lending/borrowing problem}
   \label{subsec:interbank}
We first consider an inter-bank lending/ borrowing model \citep{carmona2013mean} where the dynamics of the log-monetary reserves of $N$ banks is
  \begin{align*}
       \rd X_t^{i}&=\left[a(\Bar{X}_t-X_{t}^{i})+U_{t}^{i}\right]\rd t\\
       &+\sigma(\rho \rd W_{t}^{0}+\sqrt{1-\rho^{2}}\rd W_t^{i}),\\ 
       \Bar{X}_t&=\frac{1}{N}\sum_{i=1}^{N}X_t^{i}, i\in \mathbb{I}.
       \numberthis\label{fnc:inter-bank process}
  \end{align*}    
The state $X^i_t\in\mathbb{R}$ denotes the log-monetary reserve of bank $i$ at time $t>0$. The control $u_t^i$ denotes the cash flow to/from a central bank, where as $a(\bar{X}-X^i_t)$ denotes the lending/borrowing rate of bank $i$ from all other banks. The system is driven by $N$ independent standard Brownian motion $W_t^i$, which denotes the idiosyncratic noise, and a common noise $W_t^0$. The cost function has the form,
\begin{equation}\label{fnc:inter-bank running cost}
C^{i}_{t}(\mX,U_{i};\mU_{-i})=\frac{1}{2}U_i^{2}-qU_{i}(\bar{X}-X_i)+\frac{\epsilon}{2}(\bar{X}-X_i)^2.
\end{equation}
\begin{figure}[t]
  \centering
  \vskip 0.2in
  \includegraphics[width=1.25\linewidth]{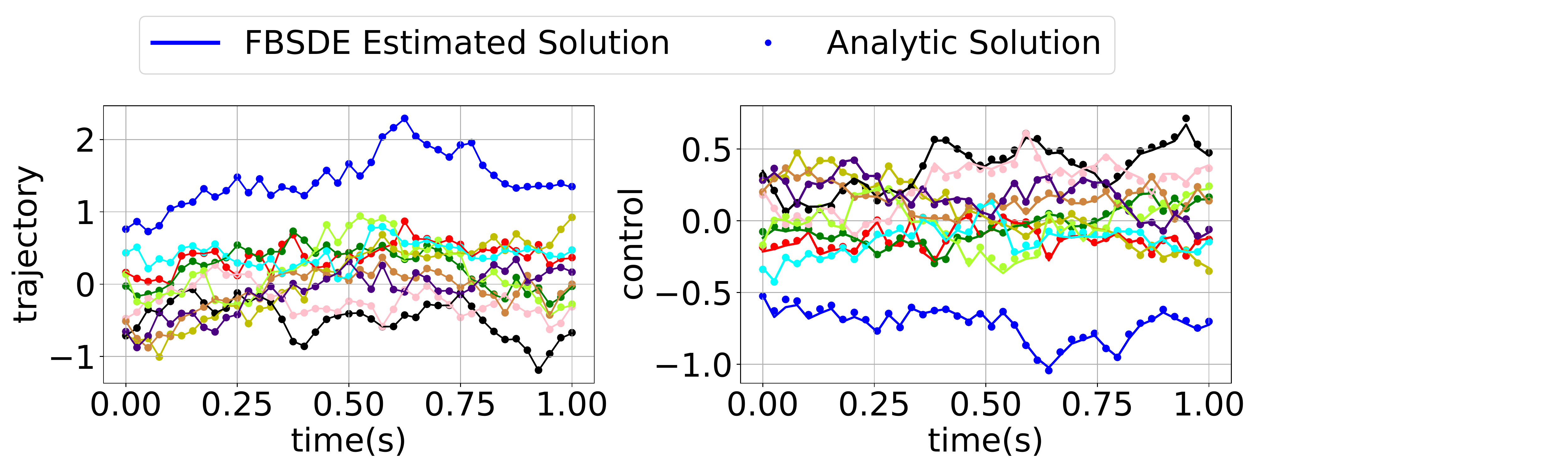}
  \vskip -0.2in
  \caption{
    Comparison of SDFP and analytical solution for the inter-bank problem. Both the state (\textit{left}) and control (\textit{right}) trajectories are aligned with the analytical solution (represented by dots).
  }%
  \label{fig:interbank10}%
  \vskip -0.15in
\end{figure}
The constants $q,\epsilon$ can be found in Appendix \ref{appendix:config}. The derivation of the FBSDEs and analytical solution are in Appendix \ref{Appendix:invar-layer}. We compare the result of our algorithm on a 10-agent problem with analytical solution and the baseline with the same hyperparameters.
Fig.~\ref{fig:interbank10}  shows the performance of SDFP-FBSDE compared with analytical solution. The state and control trajectories outputted by SDFP-FBSDE solution are aligned closely with the analytical solution.
\begin{table}[t]
\vspace{5 pt}
 \begin{center}
 \begin{tabular}{lll}
 \hline
 \multicolumn{1}{c}{\textsc{Framework}}  &\multicolumn{1}{c}{\textsc{RSE}} &\multicolumn{1}{c}{\textsc{Time (hr)}}
 \\ \hline
\textsc{Baseline}       &0.0423& 2.23\\
 \textsc{SFDP}             &0.0105& 1.55 \\
 \hline
 \end{tabular}
 \vspace{-5 pt}
 \caption{Comparison with \citet{han2019deep} on the 10-agent inter-bank problem.}
 \label{tab:10 agents compare}
 \end{center}
 \vspace{-15 pt}
 \end{table}
Table \ref{tab:10 agents compare} shows the numerical performance compared against prior work by \citet{han2019deep}. Our method outperforms by RSE (\ref{fnc:RSE-loss}) and computation wall time.

\textbf{Ablation Experiments:} In order to verify the effect of combination of IL and IS introduced in $\S$\ref{sec:convergence} and $\S$\ref{sec:invar-layer}, we perform the ablation experiments on the 500 agents inter-bank problem. Fig.\ref{fig:ablation-exp} shows the model equipped with IS and IL would obtain superior convergence rate and better results on evaluation set regardless of the choice of backbone.

\begin{figure}[t]
  \centering
  \vskip 0.2in
  \includegraphics[width=1.0\linewidth]{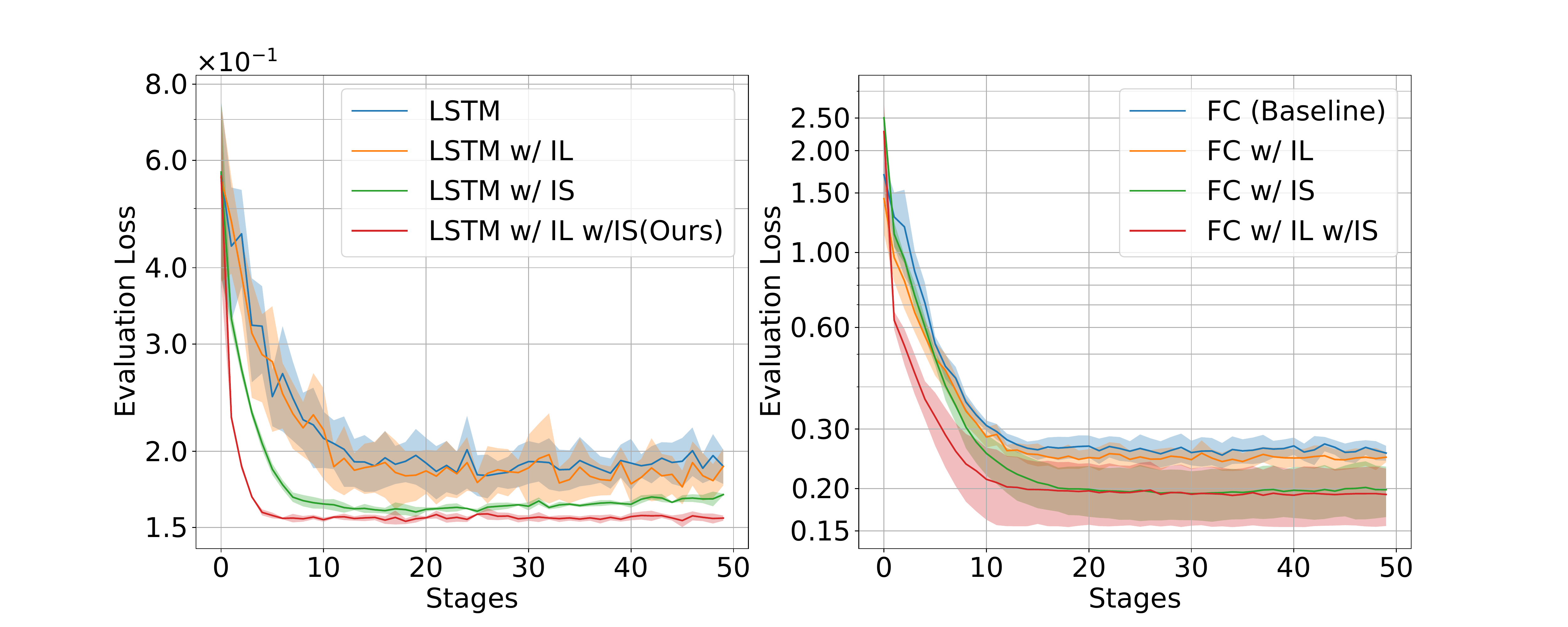}
  \vskip -0.2in
  \caption{
    Ablation experiments on LSTM and FC backbone.
  }%
  \label{fig:ablation-exp}
  \vskip -0.1in
\end{figure}
\begin{figure}[t]
  \centering
  \vskip 0.2in
  \includegraphics[width=1.0\linewidth]{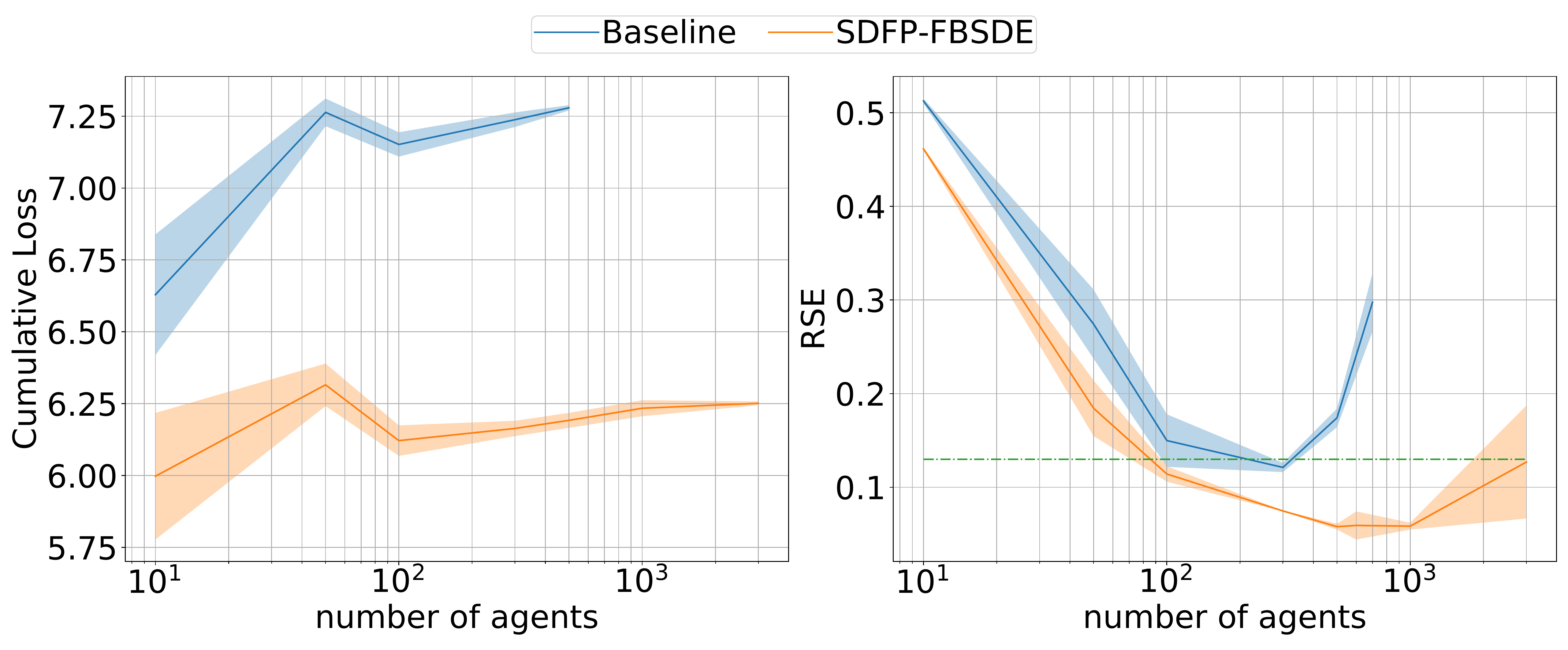}
  \vskip -0.2in
  \caption{
    Comparison of SDFP-FBSDE and Baseline for inter-bank problem with different number of agents evaluated on cumulative loss(\ref{fnc:acc-loss}) and RSE(\ref{fnc:RSE-loss}).
  }%
  \label{fig:RSE-over-agents}
  \vskip -0.15in
\end{figure}
\textbf{High Dimension Experiments:} Convinced by the ablation experiments, we conducted comparison between baseline and our proposed algorithm on different numbers of agents (up to 3000). Because of the limitation of graphical memory of hardware, the maximum number of agents for baseline is 700. The result is demonstrated in Fig.\ref{fig:RSE-over-agents}. SDFP-FBSDE outperforms the baseline by cumulative total cost (\ref{fnc:acc-loss}) and RSE (\ref{fnc:RSE-loss}). As been shown in FBSDE theory (\ref{fnc:FBSDE}), the augmented control is characterized by $\mU_{0,*}$. This means more agents will lead to more actuator and better explorations hence the RSE loss would decrease when number of agent is smaller than 300. When the number of agents keeps growing, the network would have difficulty of learning many agents representation, thus leading to the increase of RSE. Our method mitigates the curse of many agent, and postpones the RSE turning point to 1000 agents. Notably, SDFP-FBSDE has only marginally larger RSE loss at 3000 agents compared with baseline at 300 agents. The left subplot in Fig.\ref{fig:RSE-over-agents} shows our algorithm found more reasonable control sequence than baseline. The margin between SDFP-FBSDE and baseline is even larger on evaluation loss (\ref{fnc:eval-loss}). Since this plot is not informative because of the large difference of scale, we defer it to Appendix \ref{appendix:eval-loss-inter-bank}.

\textbf{Superlinear Experiments}:
We also consider a variant of dynamics in equation (\ref{fnc:inter-bank process}):
 \begin{align*}
   \rd X_t^{i}&=\left[a(\Bar{X}-X_{t}^{i})^{3}+U_{t}^{i}\right]\rd t\\&+\sigma(\rho \rd W_{t}^{0}+\sqrt{1-\rho^{2}}\rd W_t^{i}),\\ \Bar{X}_t&=\frac{1}{N}\sum_{i=1}^{N}X_t^{i}, i\in \mathbb{I}.\numberthis
 \end{align*}
Due to the nonlinearity in the drift term, analytical solution or simple numerical representation of the Nash equilibrium does not exist \citep{han2019deep}. 
The drift rate $a$ is set to $1.0$ to compensate for the vanishing drift term caused by super-linearity. Heuristically, the distribution of control and state should be more concentrated than that of the linear dynamics. We compare the state and control of agent $i$ at terminal time against analytical solution and Deep FBSDE solution of the linear dynamics with the same coefficients. Fig.~\ref{fig:500agent-distribution} is generated by evaluating the trained Deep FBSDE model with a batch size of 50000. Fig.\ref{fig:500agent-distribution} shows that the solution from superlinear dynamics is more concentrated as expected. The terminal control distribution verifies that the superlinear drift term pushes the state back to the average faster than linear dynamics and thus requires less control effort. Since the numerical solution is not available, we compare the cumulative loss (\ref{fnc:acc-loss}) loss and evaluation loss (\ref{fnc:eval-loss}) between baseline and our algorithm (see Appendix \ref{Appendix:superlinear}).
\begin{figure}[t]
  \centering
  \vskip 0.2in
  \includegraphics[width=1.03\linewidth]{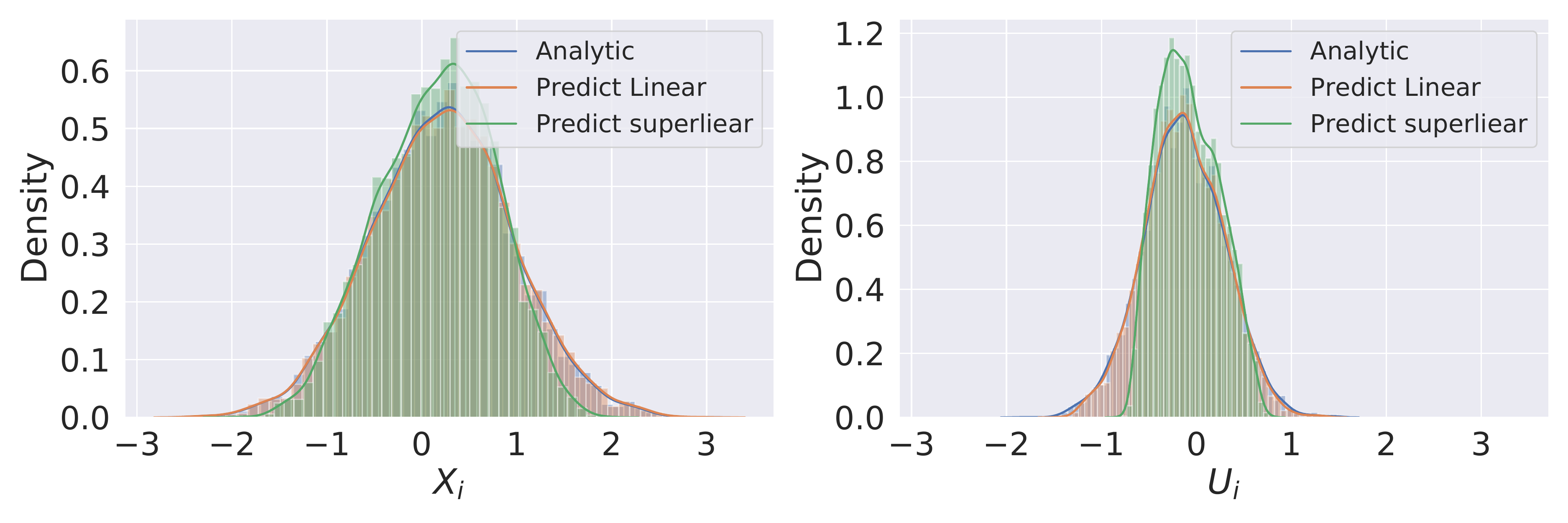}
  \vskip -0.2in
  \caption{
    Terminal time step state $\mX$ and control $\mU$ distribution of $i$th agent for linear and superlinear dynamics. The analytic and predicted distribution for linear case are very similar.
  }%
  \label{fig:500agent-distribution}
  \vskip -0.2in
\end{figure}
\vspace{-0.2in}
\subsection{Extension to Partially Observed Game in Robotics}
\label{sec:car racing result}
In this section, we demonstrate the general applicability of our framework on an autonomous racing example in belief space, and show how BSDE influences the game when competition is  triggered experimentally. we consider a autonomous racing problem with race car dynamics
\begin{equation}
       \dot{X}_i = [v\cos\theta, v\sin\theta, u_{\text{acc}}-c_{\text{drag}}v, u_{\text{steer}}v/L]^\mathrm{T}
\end{equation}
where $X_i = [x,y,v,\theta]\T$ represent the $x, y$ position, forward velocity and heading of the $i$th vehicle. Here we assume $x,y,v,u_{\text{acc}}\in\mathbb{R}, u_{\text{steer}}\in[-1,1]$. The goal of each player is to drive faster than the opponent, stay on the track and avoid collision. The running cost $C^i$ is defined in Appendix \ref{appendix:car-racing cost}.
During the game, players have access to partially observed global augmented states estimated by an Extended Kalman Filter (EKF) and opponent's controller in the past games. Additionally, we assume that stochasticity enters the system through the control channels and have a continuous-time noisy observation model with full-state observation. The FBSDE derivation of belief space stochastic dynamics is included in Appendix \ref{appendix:racing}. We focus on the 2-car racing scenario for the interpretability of results, since no analytical solution exists for the problem. With the 2-car setup, interesting behaviors arise when the cost function of each car is altered. Since all the trials complete 1 lap, here we only show the first 8 seconds' result for neatness.

\textbf{Non-cooperative and Non-competitive Case} We first test the capability of our framework on a partially observed learning problem where all states are estimated with observation noise and additive noise in the system. Both cars can stay in the track as expected. Since there is no competition between the two cars, they demonstrate similar behaviors. The plot of trajectories with 64 trails of games can be found on the top left of Fig.\ref{fig:Car-racing}. The EKF posterior estimation is not drawn in Fig.\ref{fig:Car-racing} for neatness. For single trails plot with EKF posterior can be found in Appendix \ref{Appendix:car-racing post}.

\begin{figure}[t]
  \centering
  \vskip -0.1in
\includegraphics[width=1.1\linewidth]{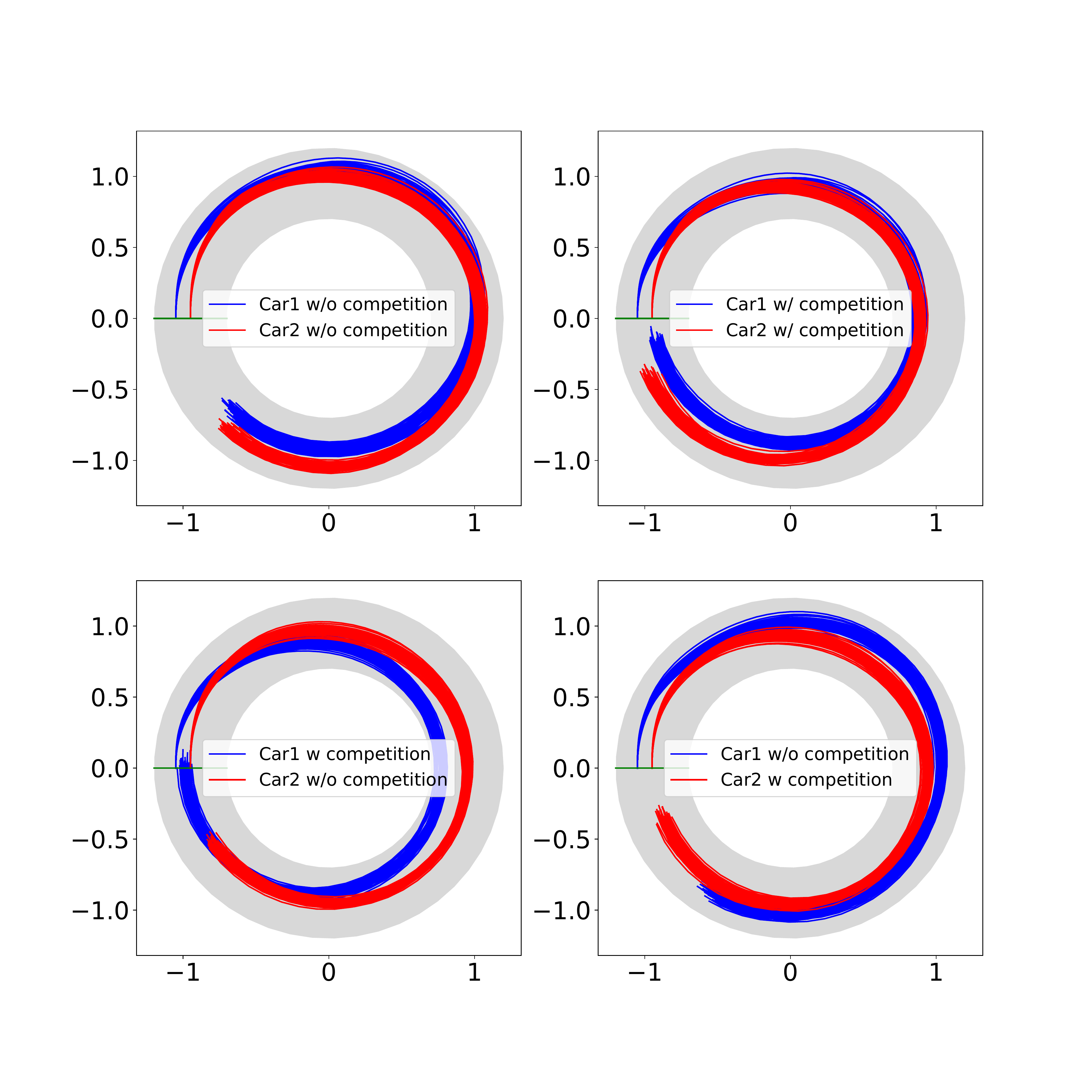}
  \vskip -0.4in
  \caption{
        The plots contains 64 trials of racing. The performance varies with respect to the competition loss.
  }%
  \label{fig:Car-racing}
  \vskip -0.1in
\end{figure}

\textbf{Competitive Case } When we add competition loss (see Appendix \ref{appendix:car-racing cost}) on the cars, the propagation of BSDE (\ref{fnc:FBSDE}) will be modified accordingly. Therefore, the racing game demonstrates interesting properties. When competition loss is applied on both cars, both of them try to cut the corner in order to occupy the leading position as shown on the top right of Fig. \ref{fig:Car-racing}, and both of them travel longer distance compared with the ones do not have competitive loss. When competition loss is present in only one of the two cars, then the one with competition loss dominates the game as shown in the botton subplots of Fig.\ref{fig:Car-racing}. 

\vspace{-3pt}
\section{Conclusion}
\vspace{-5pt}
\label{sec:conclusion}
In this paper, we first extend the theoretical analysis from \cite{han2020convergence} and introduce importance sampling to improve the sample efficiency and convergence rate. To further push our work to handle larger number of agents with appreciable time and memory complexity, batch query scheme and invariant layer implementation are proposed. The scalability of our algorithm, along with a detailed sensitivity analysis, is demonstrated in an inter-bank borrowing/lending example. Our framework achieves better performance in different metrics and scales to significantly higher dimensions than the state-of-the-art. The general applicability of our framework is showcased on a belief space racing problem in the partially observed scenario.

\section*{Acknowledgement}
This project is supported under ARO W911NF2010151. We would also like to thank Professor Matthieu Bloch and Guan-Horng Liu for the helpful discussions.

\nocite{langley00}

\bibliography{example_paper}
\bibliographystyle{icml2021}

\newpage
\appendix
\onecolumn
\begin{center}
\textbf{{\huge Supplementary Material}}
\end{center}
\section{Multi-agent HJB Derivation}\label{Appendix:hjb_derivation}
Applying Bellman's principle to the value function (\ref{Value_function}) yields,
\begin{equation}\label{networked-functional}
\begin{aligned}
    V^i(t,\mX(t))&=\inf\limits_{U_i \in \mathcal{U}_i}\E\left[V^i(t+\rd t,\mX(t+dt))+\int_t^{t+dt} C^{i}\rd\tau\right]\\
    &=\inf \limits_{U_i \in \mathcal{U}_i}\E\left[C^{i}\rd t +V^i(t,\mX(t))+V_t^i(t,\mX(t))\rd t\right.\\
    &+\left. V_x^{i\mathrm{T}}(t,\mX(t))\rd\mX+\frac{1}{2}\tr(V_{xx}(t,\mX(t)\Sigma\Sigma\T) \rd t\right]\\
    &=\inf \limits_{U_i \in \mathcal{U}_i}\E\left[C^{i}\rd t +V^i(t,\mX(t))+V_t^i(t,\mX(t))\rd t\right.\\
    &+\left. V_x^{i\mathrm{T}}(t,\mX(t))((f+G\mU)\rd t+\Sigma\rd\boldsymbol{W})+\frac{1}{2}\tr(V_{xx}^{i}(t,\mX(t))\Sigma\Sigma\T)\rd t\right]\\
    &= \inf \limits_{U_i \in \mathcal{U}_i}\left[C^{i}\rd t +V^i(t,\mX(t))+V_t^i(t,\mX(t))\rd t\right.\\
    &+\left. V_x^{i\mathrm{T}}(t,\mX(t))((f+G\mU)\rd t)+\frac{1}{2}\tr(V_{xx}^{i}(t,\mX(t))\Sigma\Sigma\T)\rd t\right]\\
    \Rightarrow 0 &= V_t^i(t,\mX(t)) + \inf \limits_{U_i \in \mathcal{U}_i}\left[C^{i}+ V_x^{i\mathrm{T}}(t,\mX(t))(f+G\mU)\right]+\frac{1}{2}\tr(V_{xx}^{i}(t,\mX(t))\Sigma\Sigma\T)
\end{aligned}    
\end{equation}
Given the cost function assumption (the cost function is quadratic w.r.t control variable.), the infimum can be obtained explicitly using optimal control $U_{i}^*=-R^{-1}(G_i^\mathrm{T}V^i_x+Q_i^\mathrm{T}\mX)$. With that we can obtain the final form of the HJB PDE as
\begin{equation}\label{Appendix:HJB}
    V^i_t+h+V^{i\mathrm{T}}_x (f+G \mU_{0,*})+\frac{1}{2}\tr(V^i_{xx}\Sigma\Sigma\T)=0, \quad V^i(T,\mX)=g(\mX(T)).
\end{equation}

\section{Multi-agent FBSDE Derivation}\label{Appendix:FBSDE_derivation}
Given the HJB PDE in \eqref{Final_form_HJB}, one can apply the non-linear Feynman-Kac lemma \cite{karatzas1991brownian} to obtain a set of FBSDE as
\begin{equation}
   \begin{aligned}
		\rd  \mX(t)&=(f +G\mU_{0,*})\rd t+\Sigma\rd\boldsymbol{W},\quad \mX(0)=\vx_0 \quad (\text{FSDE})\\
        \rd V^i&=-h\rd t+V_{x}^{i\mathrm{T}} \Sigma \rd \boldsymbol{W}, \quad V(\mX(T))=g(\mX(T)).  \quad (\text{BSDE})
	\end{aligned}    
\end{equation}

The backward process is derived by applying Ito's lemma on $V^i$
\begin{align*}
\rd V^i &= V^i_t \rd t + V_x^{i\mathrm{T}}\rd \mX + \frac{1}{2}\tr(V^i_{xx}\Sigma\Sigma^\mathrm{T})\rd t\\
&\text{Plug in eq.\ref{Appendix:HJB} into} V_t^i \text{ term}, \text{and eq.\ref{state_process}} \text{into } \rd \mX \text{ term}\\
&= (-h-V^{i\mathrm{T}}_x (f+G \mU_{0,*})-\frac{1}{2}\tr(V^i_{xx}\Sigma\Sigma\T))\rd t + V_x^{i\mathrm{T}}((f +G\mU_{0,*})\rd t+\Sigma\rd\boldsymbol{W}) + \frac{1}{2}\tr(V^i_{xx}\Sigma\Sigma\T)\rd t\\
&= -h\rd t + V_x^{i\mathrm{T}}\Sigma\rd \boldsymbol{W}.
\end{align*}
\section{Metrics}\label{appendix:metrics}
In the test set, we randomly select $B$ initial states, and $B\times T$ noise $\mW$, where $B$ is Batch size and $T$ is time horizon. We evaluate the performance of models based on three different metrics. All losses (for comparison) are computed by averaging the last 10 stages and over 3 random seeds for fair comparison.
\subsection{Relative Square Error(RSE)}\label{sec:RSE-loss}
The RSE is a metric applied in test phase which is defined as following:
\begin{equation}\label{fnc:RSE-loss}
\begin{aligned}
\mathcal{L}_{RSE}=\frac{\sum{\substack{i\in \sI \\ 1\leq j \leq B}}(\hat{Y}^i(0,\mX^{j}(0))-Y^i(0,\mX^{j}(0)))^2}{\sum{\substack{i\in \sI \\ 1\leq j \leq B}}(\hat{Y}^i(0,\mX^{j}(0))-\bar{Y}^i(0,\mX^{j}(0)))^2},
\end{aligned}    
\end{equation} 
Where $Y^i$ is the analytical solution of value function for $i$th agents at initial state $\mX^j(0)$. The initial state $\mX^{j}(0)$ is new batch of data sampled from same distribution as $\mX(0)$ in the training phase. The batch size $B$ is 256 for all inter-bank simulations. $\hat{Y}^i$ is the approximated value function for $i$th agent by FBSDE controller, and $\bar{Y}^i$ is the average of analytical solution for $i$th agent over the entire batch.
\subsection{Evaluation/training Loss}\label{sec:eval-loss}
The evaluation loss is same as training loss which is defined as the mean square error between true terminal value evaluated on the terminal state and the value propagated by the BSDE,
\begin{equation}
\begin{aligned}\label{fnc:eval-loss}
\mathcal{L}(\hat{Y}_T^i,Y_T^i)=\frac{1}{B}||\hat{Y}_T^i-Y_T^i||_2^2
\end{aligned}    
\end{equation} 
\subsection{Cumulative Loss}\label{sec:acc-loss}
The cumulative loss is computed explicitly from the objective function of optimal control (\ref{cumulative_loss}) in the test phase. Here we restate it for completeness.
 \begin{equation}\begin{split}\label{fnc:acc-loss}
	\mathcal{L}_{cum}&:= J^i_t(\mX,U_{i,m};\mU_{-i,m-1})\\
	&=\E\left[g(\mX_T)+\int_0^T C^i(\mX_\tau,U_{i}(\mX_\tau);\mU_{-i})\rd \tau\right],
	\end{split}
\end{equation}

\section{Missing Derivation and Proof in Section \ref{sec:SDFP-FBSDE}}
\subsection{Assumption 2}\label{appendix:liptz}
Here we state the assumption \ref{asmpt:liptz} in detail. 

Consider a general FBSDE system,
\begin{equation}
\begin{aligned}
    &\mX_T^{t,\vx}=\vx+\int_{t}^{T}\mu_s\rd s+\int_{t}^{T}\Sigma_s\rd\mW_s &\text{ (FSDE)},\\ 
    & Y_{t}^{T,\vx}=g(\mX_T^{t,\vx})-\int_{t}^{T}H_s\rd s+\int_{t}^{T}Z_s \rd \mW_s &\text{ (BSDE)}, \\
\end{aligned}    
\end{equation}
We use $|\cdot|$ and $||\cdot||_F$ to denote the $L^2$ norm and Frobenius norm respectively. For terminal loss $g(\cdot)$, drift function $\mu(\cdot,\cdot,\cdot)$, $H(\cdot,\cdot,\cdot)$, diffusion function $\Sigma(\cdot,\cdot)$, and control function $\vu(\cdot,\cdot)$, they are Lipschitz continuous with respect to their arguments:
\begin{equation}
	\begin{aligned}
	    |g(t,\vx_1)-g(t,\vx_2)|^2&\leq g_x|\vx_1-\vx_2|^2\\
	    |H(t,\vx_1,\vz_1)-H(t,\vx_2,\vz_2)|^2&\leq H_x|\vx_1-\vx_2|^2+H_{z}||\vz_1-\vz_2||_F^2,\\
		||\Sigma(t,\vx_1)-\Sigma(t,\vx_2)||_F^2&\leq\Sigma_x|\vx_1-\vx_2|^2\\
		||\vu(t,\vx_1)-\vu(t,\vx_2)||^2&\leq u_x|\vx_1-\vx_2|^2\\
		|\mu(t,\vx_1,\vu_1)-\mu(t,\vx_2,\vu_2)|^2&\leq \mu_x|\vx_1-\vx_2|^2+\mu_{u}|\vu_1-\vu_2|^2,\\
	\end{aligned}
\end{equation}	
Here $\mu_x,\mu_u,g_x,H_x,H_z,\Sigma_x,u_x$ are all positive constants.
\subsection{Proof of Lemma \ref{Lem: Y Bound}}\label{lem:lemma1-proof}
\begin{proof}
Denote $(\mX_s^{t,\vx},Y_s^{t,\vx},Z_s^{t,\vx})_{t\leq s\leq T}$ as the solution for the FBSDE system for the $i$th agent:
\begin{equation}
\begin{aligned}
    &\mX_T^{t,\vx}=\vx+\int_{t}^{T}\mu_s\rd s+\int_{t}^{T}\Sigma_s\rd\mW_s &\text{ (FSDE)},\\ 
    & Y_{t}^{T,\vx}=g(\mX_T^{t,\vx})-\int_{t}^{T}H_s\rd s+\int_{t}^{T}Z_s \rd \mW_s &\text{ (BSDE)}, \\
\end{aligned}    
\end{equation}
for any $(t_0,\vx)\in [t_0,T]\times \mathcal{X}$. Then for any $t\in [t_0,T]$ and $\vx_1,\vx_2 \in \mathcal{X}$, let $(\mX_t^{j},Y_t^{j},Z_t^{j})$ be the short notation of $(\mX_t^{t_0,\vx_j},Y_t^{t_0,\vx_j},Z_t^{t_0,\vx_j})$, where $j\in\left\{1,2\right\}$, $t\in[0,T]$.\\
Here we define $\delta \mX_t,\delta Y_t,\delta Z_t,\delta \Sigma_t,\delta H_t$ as:
\begin{equation}
	\begin{aligned}
		\delta \mX_t&=\mX_t^{1}-\mX_t^{2},\\
		\delta Y_t&=Y_t^{1}-Y_t^{2},\\
		\delta Z_t&=Z_t^{1}-Z_t^{2},\\
		\delta \mu_t&=\mu_t(t,\mX_t^{1},\mU_t^1)-\mu(t,\mX_t^{2},\mU_t^2),\\
		\delta H_t&=H(t,\mX_t^{1},Z_t^{1})-H(t,\mX_t^{2},Z_t^{2}),\\
		\delta \Sigma_t&=\Sigma(t,\mX_t^{1})-\Sigma(t,\mX_t^{2}),\\
	\end{aligned}
\end{equation}
Then we have:
\begin{equation}
	\begin{aligned}
		\rd\delta \mX_t&=\rd\mX_t^{1}-\rd\mX_t^{2}\\
		&=(\mu_t(t,\mX_t^1,\mU_t^{1})\rd t+\Sigma(t,\mX_{t}^{1})\rd W_t)-(\mu_t(t,\mX_t^{2},\mU_t^{2})\rd t+\Sigma(t,\mX_{t}^{2})\rd W_t)\\
		&=(\delta \mu_t)\rd t+\delta\Sigma_t \rd W_t,\\
		\rd\delta Y_t&=-\delta H_t\rd t+\delta Z_t\T\rd W_t.\\
	\end{aligned}
\end{equation}
By applying It\^{o} lemma to $\rd \delta \mX_t$ and $\rd \delta Y_t$,
\begin{equation}
	\begin{aligned}
		\rd |\delta \mX_t|^2&=(2\delta \mu_t\delta \mX_t+\frac{1}{2}\cdot 2 \delta ||\Sigma_t||_F^2)\rd t+2(\delta \mX_t)\T\delta\Sigma_t\rd W_t\\
		&=(2\delta \mu_t\delta \mX_t+\delta ||\Sigma_t||_F^2)\rd t+2(\delta \mX_t)\T\delta\Sigma_t\rd W_t\\
		\rd |\delta Y_t|^2&=(-2\delta H_t\delta Y_t+\delta ||Z_t\T||_F^2)\rd t+2(\delta Z_t\delta Y_t)\T\rd W_t\\
	\end{aligned}
\end{equation}
By taking the expectation on both sides of $\rd|\delta \mX_t|^2$, it will yield:
\begin{equation}
	\begin{aligned}
		\mathbb{E}[|\delta \mX_t|^2]&=|\vx_1-\vx_2|^2+\int_{t_0}^{t}\mathbb{E}\left[2\delta \mu_s\delta \mX_s+\delta ||\Sigma_s||_F^2\right]\rd s\\
        &\leq |\vx_1-\vx_2|^2+\int_{t_0}^{t}\mathbb{E}\left[(\mu_x+\mu_u u_x)^{-1}|\delta \mu_s|^2+(\mu_x+\mu_u u_x)|\delta \mX_s|^2+\delta ||\Sigma_s||_F^2\right]\rd s\\
        &\leq |\vx_1-\vx_2|^2+\int_{t_0}^{t}\mathbb{E}\left[(\mu_x+\mu_u u_x)^{-1}(\mu_x|\delta \mX_s|^2+\mu_u |\delta \vu|^2 )+(\mu_x+\mu_u u_x)|\delta \mX_s|^2+\delta ||\Sigma_s||_F^2\right]\rd s\\
		&\leq |\vx_1-\vx_2|^2+\int_{t_0}^{t}\mathbb{E}\left[(\mu_x+\mu_u u_x)^{-1}(\mu_x|\delta \mX_s|^2+\mu_u u_x|\delta \mX_s|^2)+(\mu_x+\mu_u u_x)|\delta \mX_s|^2+\delta ||\Sigma_s||_F^2\right]\rd s\\
		&\leq |\vx_1-\vx_2|^2+\int_{t_0}^{t}\mathbb{E}\left[|\delta \mX_s|^2+(\mu_x+\mu_u u_x)|\delta \mX_s|^2+\Sigma_x|\delta \mX_s|^2\right]\rd s\\
		&= |\vx_1-\vx_2|^2+(I+\mu_x+\mu_u u_x+\Sigma_x)\int_{t_0}^{t}\mathbb{E}|\delta \mX_s|^2\rd s\\
		&\text{by Gronwall's inequality}\\
		&\leq e^{(I+\mu_x+\mu_u u_x+\Sigma_x)(t-t_0)}|\vx_1-\vx_2|^2\\
		&= e^{\xi(t-t_0)}|\vx_1-\vx_2|^2
	\end{aligned}
\end{equation}
Where $\xi=I+\mu_x+\mu_u u_x+\Sigma_x$.

Similarly, we can have,
\begin{equation}
	\begin{aligned}
		\mathbb{E}[|\delta \mY_t|^2]&=\mathbb{E}|\delta \mY_T|^2+\int_{t}^{T}\mathbb{E}\left[2\delta H_s\delta Y_s-\delta ||\mZ_s\T||_F^2\right]\rd s\\
		&=\mathbb{E}|\vg(T,\vx_1)-\vg(T,\vx_2)|^2+\int_{t}^{T}\mathbb{E}\left[2\delta H_s\delta Y_s\right]-\mathbb{E}\||\mZ_s||_F^2\rd s\\
		&\leq g_x\mathbb{E}|\delta \mX_T|^2+\int_{t}^{T} H_z \mathbb{E}|\delta Y_s|^2+ H_z^{-1} \mathbb{E}|\delta H_s|^2
		-\mathbb{E}||\mZ_s||_F^2\rd s\\
		&\leq g_x\mathbb{E}|\delta \mX_T|^2+\int_{t}^{T} H_z \mathbb{E}|\delta Y_s|^2+ H_z^{-1} \mathbb{E}\left[H_x|\delta \mX_s|^2+H_z |\delta \mZ_s|^2\right]-\mathbb{E}||\mZ_s||_F^2\rd s\\
		&= g_x\mathbb{E}|\delta \mX_T|^2+\int_{t}^{T} H_z \mathbb{E}|\delta Y_s|^2+ H_z^{-1}H_x \mathbb{E}|\delta \mX_s|^2 \rd s\\
		&\leq \left[ g_x e^{\xi(t-t_0)}+H_x\frac{e^{\xi(T-t)}-e^{\xi(t-t_0)}}{H_z \Sigma_x}\right]|\vx_1-\vx_2|^2+H_z\int_{t}^{T} \mathbb{E}|\delta Y_s|^2 \rd s\\
		&\text{by Gronwall's inequality}\\
		|\delta \mY_t|^2&\leq e^{H_z(T-t)}\left[ g_x e^{\xi(T-t_0)}+H_x\frac{e^{\xi(T-t_0)}-e^{\xi(t-t_0)}}{H_z \Sigma_x}\right]|\vx_1-\vx_2|^2\\
	\end{aligned}
\end{equation}
When $t_0=0$, one can have:
\begin{equation}
	\begin{aligned}
    |\delta Y_T|^2&\leq g_x e^{\xi T}|\vx_1-\vx_2|^2\\
    &=L_1 |\vx_1-\vx_2|^2\\
    |\delta Y_0|^2&\leq e^{H_zT}\left[ g_x e^{\xi T}+H_x\frac{e^{\xi T}-1}{H_z \Sigma_x}\right]|\vx_1-\vx_2|^2\\
    &=L_2 |\vx_1-\vx_2|^2\\
	\end{aligned}
\end{equation}
Where
\begin{equation}
	\begin{aligned}
        L_1&=g_x e^{\xi T}\\
        L_2&=e^{H_zT}\left[ g_x e^{\xi T}+H_x\frac{e^{\xi T}-1}{H_z \Sigma_x}\right]\\
        \xi&=I+\mu_x+\mu_u u_x+\Sigma_x
	\end{aligned}
\end{equation}
\end{proof}

\subsection{Lemma.2 with Proof}\label{Lem: proof-lemma1-impt-samp}
\begin{lemma}
	\label{Lem:lemma1-impt-samp}
    Denote $(\mX_s^{t,\vx},Y_s^{t,\vx},Z_s^{t,\vx})_{t\leq s\leq T}$ as the solution for the FBSDE system with importance sampling (\ref{fnc:impt-FSDE}, \ref{fnc:impt-BSDE}) satisfying assumptions \ref{asmpt:dyn} and \ref{asmpt:liptz}. Denote the difference of $Y$ component at two different states $\vx_1$ and $\vx_2$ as:
	\begin{equation}
		\begin{aligned}
		    \delta \mX_t=\mX_t^{t_0,\vx_1}-\mX_t^{t_0,\vx_2}, \delta Y_t=Y_t^{t_0,\vx_1}-Y_t^{t_0,\vx_2}.
		\end{aligned}
	\end{equation}	
	Then we can have:
	\begin{equation}
		\begin{aligned}
			|\delta Y_{T}|^2 &\leq \tilde{L}_1|\vx_1-\vx_2|^2, \\
			|\delta Y_{t_0}|^2 &\leq \tilde{L}_2|\vx_1-\vx_2|^2,
		\end{aligned}
	\end{equation}	
	Where $L_1$ and $L_2$ are defined as:
	\begin{equation}
		\begin{aligned}
		    \tilde{L}_1&=g_xe^{\tilde{\xi}}\\
			\tilde{L}_2&= e^{2(H_z+k_z)(T-t_0)}\left[g_x e^{\tilde{\xi}(T-t_0)}+H_x(H_z^{-1}+k_z^{-1})\frac{e^{\tilde{\xi}(T-t_0)}-1}{2 \Sigma_x}\right],\\
			&\tilde{\xi}=I+\tilde{\mu}_x+\tilde{\mu}_u u_x+\Sigma_x,
		\end{aligned}
	\end{equation}
	Where $\mu_x,\mu_u,\Sigma_x,H_x,H_z, g_x,u_x$ are Lipschitz constant defined in Assumption.\ref{asmpt:liptz}. The definition of Lipschitz constant $\tilde{\mu}_x,\tilde{\mu}_u,k_z$ and proof can be found in the following proof.
\end{lemma}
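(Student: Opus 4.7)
The plan is to mirror the proof of Lemma~\ref{Lem: Y Bound}, carrying through the two changes that importance sampling introduces: an added drift $\Sigma_s K_s$ in the FSDE~(\ref{fnc:impt-FSDE}) and an added term $\tilde{Z}_s K_s$ inside the BSDE driver~(\ref{fnc:impt-BSDE}). The argument will therefore consist of (i) defining effective Lipschitz coefficients that absorb these extras, (ii) repeating the forward It\^o/Gronwall estimate to control $\mathbb{E}|\delta\tilde{\mX}_s|^2$, and (iii) running the backward It\^o/Gronwall estimate with two Young inequalities whose weights are chosen so that the quadratic variation $\|\delta\tilde{Z}_s\|_F^2$ produced by the BSDE cancels exactly.

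First I would set $\tilde{\mu}_s := \mu_s + \Sigma_s K_s$. Since $K_s$ is bounded and $\Sigma$ is Lipschitz (Assumption~\ref{asmpt:liptz}), $\tilde{\mu}$ inherits Lipschitz constants $\tilde{\mu}_x,\tilde{\mu}_u$ that dominate $\mu_x,\mu_u$. I would also introduce $k_z$ defined so that $|(Z_1-Z_2)K_s|^2 \le k_z\|Z_1-Z_2\|_F^2$ uniformly in $s$, which exists by boundedness of $K_s$. Repeating the It\^o expansion of $|\delta\tilde{\mX}_t|^2$ with $\tilde{\mu}$ in place of $\mu$, applying Young's inequality with weight $\tilde{\mu}_x+\tilde{\mu}_u u_x$ to the cross term, bounding $\|\delta\Sigma_s\|_F^2\le\Sigma_x|\delta\tilde{\mX}_s|^2$, and invoking Gronwall produces $\mathbb{E}|\delta\tilde{\mX}_t|^2 \le e^{\tilde{\xi}(t-t_0)}|\vx_1-\vx_2|^2$ with $\tilde{\xi}=I+\tilde{\mu}_x+\tilde{\mu}_u u_x+\Sigma_x$. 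Setting $t=T$ and using $|\vg(T,\vx_1)-\vg(T,\vx_2)|^2\le g_x|\delta\tilde{\mX}_T|^2$ gives $\tilde{L}_1 = g_x e^{\tilde{\xi}(T-t_0)}$.

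For the backward bound, applying It\^o to $|\delta\tilde{Y}_t|^2$ and taking expectations yields
\[
\mathbb{E}|\delta\tilde{Y}_{t_0}|^2 \le \mathbb{E}|\delta\tilde{Y}_T|^2 + \int_{t_0}^{T}\mathbb{E}\!\left[2|\delta\tilde{Y}_s||\delta H_s| + 2|\delta\tilde{Y}_s||\delta(\tilde{Z}_s K_s)| - \|\delta\tilde{Z}_s\|_F^2\right]\rd s.
\]
The crucial step is to apply Young's inequality with weight $2H_z$ to the first cross term and weight $2k_z$ to the second. The first yields $2H_z|\delta\tilde{Y}_s|^2 + (2H_z)^{-1}(H_x|\delta\tilde{\mX}_s|^2 + H_z\|\delta\tilde{Z}_s\|_F^2)$, while the second yields $2k_z|\delta\tilde{Y}_s|^2 + (2k_z)^{-1}\cdot k_z\|\delta\tilde{Z}_s\|_F^2$. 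The two $\|\delta\tilde{Z}_s\|_F^2$ contributions sum to $\tfrac{1}{2}+\tfrac{1}{2}=1$, cancelling the $-\|\delta\tilde{Z}_s\|_F^2$ in the It\^o expansion, and what remains is $2(H_z+k_z)|\delta\tilde{Y}_s|^2 + \tfrac{H_x}{2}(H_z^{-1}+k_z^{-1})|\delta\tilde{\mX}_s|^2$. Substituting the forward bound $\mathbb{E}|\delta\tilde{\mX}_s|^2\le e^{\tilde{\xi}(s-t_0)}|\vx_1-\vx_2|^2$, integrating, and applying Gronwall backward in time then produces exactly $\tilde{L}_2$ in the stated form.

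The principal obstacle will be the careful bookkeeping of those Young weights: the factor $2$ in the exponential rate $2(H_z+k_z)$ and the combination $H_z^{-1}+k_z^{-1}$ in the $|\delta\tilde{\mX}|^2$ coefficient are both forced by the requirement that the two $\|\delta\tilde{Z}\|_F^2$ contributions sum to exactly one, so that the It\^o quadratic variation is neutralised. A secondary and easier point is verifying that $\tilde{\mu}_x,\tilde{\mu}_u,k_z$ are finite, which is precisely where the boundedness and square-integrability hypothesis on $K_s$ from Theorem~\ref{thm:impt-samp} is used.
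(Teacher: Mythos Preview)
Your overall strategy matches the paper's exactly: redefine the forward drift as $\tilde{\mu}_s=\mu_s+\Sigma_sK_s$, rerun the It\^o/Gronwall forward estimate to get $\tilde{\xi}$, then handle the BSDE by splitting the driver increment into $\delta H_s$ and $\delta k_s$, applying Young's inequality to each with weights $2H_z$ and $2k_z$ so that the resulting $\|\delta\tilde Z_s\|_F^2$ contributions sum to one and cancel the It\^o quadratic variation. That is precisely what the paper does.

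There is one bookkeeping slip, however. You define $k_z$ via $|(Z_1-Z_2)K_s|^2\le k_z\|Z_1-Z_2\|_F^2$, i.e.\ you treat $k_s=\tilde Z_sK_s$ as a function of $z$ only. With that definition your second Young inequality yields only $(2k_z)^{-1}k_z\|\delta\tilde Z_s\|_F^2$, and the surviving $|\delta\tilde{\mX}_s|^2$ coefficient is $\tfrac{H_x}{2}H_z^{-1}$, not the $\tfrac{H_x}{2}(H_z^{-1}+k_z^{-1})$ you then claim. The paper instead treats $k_s$ as Lipschitz in \emph{both} arguments, $|k_s(t,\vx_1,\vz_1)-k_s(t,\vx_2,\vz_2)|^2\le k_x|\vx_1-\vx_2|^2+k_z\|\vz_1-\vz_2\|_F^2$, so the second Young inequality produces an additional $(2k_z)^{-1}k_x|\delta\tilde{\mX}_s|^2$ term; it then observes that since $H_s=C^{i*}+V_x G\mU_{*,0}$ while $k_s=V_x G\mU_{*,0}$, one has $k_x\le H_x$, and replacing $k_x$ by $H_x$ gives the stated coefficient $\tfrac{H_x}{2}(H_z^{-1}+k_z^{-1})$. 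You need this $x$-dependence of $k_s$ (and the comparison $k_x\le H_x$) to arrive at the $\tilde L_2$ in the lemma statement.
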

\begin{proof}
Similar to the proof of lemma.\ref{Lem: Y Bound}, now we first analyze the forward process with IS. Inspired by the success of \cite{exarchos2018stochastic}, we select the control computed from the last run as the importance sampling term. Then the IS  term in FSDE is defined as $m_s:=\Sigma K=GU_{*,0}$. New drift term is modified as $\tilde{\mu}_s=\mu_s+m_s$ with Lipschitz constant $\tilde{\mu}_x$ and $\tilde{\mu}_u$. 
In the BSDE, Then IS term is written as $k_s=Z_sK_s=Z_s\Gamma U_{*,0}=V_x G U_{*,0}$, and the modified $\tilde{H}_s=H_s+k_s$. Here we formally write the Lipschitz constant for IS terms in FSDE and BSDE are:
	\begin{equation}\label{fnc:lipz-impt-samp}
		\begin{aligned}
            |m_s(t,\vx_1,\vu_1)-m_s(t,\vx_2,\vu_2)|^2&\leq m_x|\vx_1-\vx_2|^2+m_{u}|\vu_1-\vu_2|^2,\\
            |k_s(t,\vx_1,\vz_1)-k_s(t,\vx_2,\vz_2)|^2&\leq k_x|\vx_1-\vx_2|^2+k_z|\vz_1-\vz_2|^2,\\
		\end{aligned}
	\end{equation}
Similar to the proof (\ref{lem:lemma1-proof}), we can have,
\begin{equation}
	\begin{aligned}
		|\delta \mX_t|^2&\leq e^{(I+\tilde{\mu}_x+\tilde{\mu}_u u_x+\Sigma_x)(t-t_0)}|\vx_1-\vx_2|^2\\
		&= e^{\tilde{\xi}(t-t_0)}|\vx_1-\vx_2|^2
	\end{aligned}
\end{equation}
Where $\tilde{\xi}=I+\tilde{\mu}_x+\tilde{\mu}_u u_x+\Sigma_x$.

And for Y term we will have,
\begin{equation}
	\begin{aligned}
		\mathbb{E}[|\delta \mY_t|^2]&=\mathbb{E}|\delta \mY_T|^2+\int_{t}^{T}\mathbb{E}\left[2(\delta H_s\delta Y_s+\delta k_s\delta Y_s)-\delta ||Z_s\T||_F^2\right]\rd s\\
		&\leq g_x\mathbb{E}|\delta \mX_T|^2+\int_{t}^{T} 2H_z \mathbb{E}|\delta Y_s|^2+ (2H_z)^{-1} \mathbb{E}|\delta H_s|^2+2k_z \mathbb{E}|\delta Y_s|^2+ (2k_z)^{-1} \mathbb{E}|\delta k_s|^2-\mathbb{E}||Z_s||_F^2\rd s\\
        &\leq g_x\mathbb{E}|\delta \mX_T|^2+\int_{t}^{T} (2H_z+2k_z) \mathbb{E}|\delta Y_s|^2+ (2H_z)^{-1} \mathbb{E}\left[H_x |\delta \mX_s|^2+H_z||\delta Z_s||_F^2\right]\\
        &+ (2k_z)^{-1} \mathbb{E}\left[k_x |\delta \mX_s|^2+k_z ||\delta Z_s||_F^2\right]-\mathbb{E}||Z_s||_F^2\rd s\\
        &\text{Noticing that the drift term in BSDE w/o IS is $H_s=C^{i *}+V_xGU_{*,0}$ which is a lipschitz continous function,}\\
        &\text{while IS term is $k_s=V_xGU_{*,0}$. then we can have $k_x\leq H_x$. By replacing $k_x$ by $H_x$, it yields:}\\
        &\leq g_x\mathbb{E}|\delta \mX_T|^2+\int_{t}^{T} (2H_z+2k_z) \mathbb{E}|\delta Y_s|^2+ (2H_z)^{-1} \mathbb{E}\left[H_x |\delta \mX_s|^2+H_z||\delta Z_s||_F^2\right]\\
        &+ (2k_z)^{-1} \mathbb{E}\left[H_x |\delta \mX_s|^2+k_z ||\delta Z_s||_F^2\right]-\mathbb{E}||Z_s||_F^2\rd s\\
        &\leq g_x\mathbb{E}|\delta \mX_T|^2+\int_{t}^{T} (2H_z+2k_z) \mathbb{E}|\delta Y_s|^2+ \frac{H_x}{2}(H_z^{-1}+k_z^{-1}) \mathbb{E} |\delta \mX_s|^2+\mathbb{E}||\delta Z_s||_F^2-\mathbb{E}||Z_s||_F^2\rd s\\
        &= g_x\mathbb{E}|\delta \mX_T|^2+\int_{t}^{T} (2H_z+2k_z) \mathbb{E}|\delta Y_s|^2+\frac{H_x}{2}(H_z^{-1}+k_z^{-1}) \mathbb{E} |\delta \mX_s|^2\rd s\\
		&\text{by Gronwall's inequality}\\
		|\delta \mY_t|^2&\leq e^{2(H_z+k_z)(T-t)}\left[g_x e^{\tilde{\xi}(T-t_0)}+H_x(H_z^{-1}+k_z^{-1})\frac{e^{\tilde{\xi}(T-t_0)}-e^{\tilde{\xi}(t-t_0)}}{2 \Sigma_x}\right]|\vx_1-\vx_2|^2\\
	\end{aligned}
\end{equation}
When $t_0=0$, we have,
\begin{equation}
	\begin{aligned}
    |\delta Y_T|^2&\leq g_x e^{\tilde{\xi} T}|\vx_1-\vx_2|^2\\
    &=\tilde{L}_1 |\vx_1-\vx_2|^2\\
    |\delta Y_0|^2&\leq e^{2(H_z+k_z)T}\left[ g_x e^{\tilde{\xi} T}+H_x(H_z^{-1}+k_z^{-1})\frac{e^{\tilde{\xi} T}-1}{2\Sigma_x}\right]|\vx_1-\vx_2|^2\\
    &=\tilde{L}_2 |\vx_1-\vx_2|^2\\
	\end{aligned}
\end{equation}
Where $\tilde{\xi}=I+\tilde{\mu}_x+\tilde{\mu}_u u_x+\Sigma_x$. Following arguments in \cite{ma2002representation}, one further has,
	\begin{equation}
		\begin{aligned}
            ||Z_t||_S^2\leq ||\Sigma||_S^2 ||\nabla_x Y_t||_S^2\leq M_\Sigma \tilde{L}_2
		\end{aligned}
	\end{equation}
\end{proof}

\subsection{Proof of Theorem \ref{prop: diff Y bound}}\label{appendix:prop-diff-Y proof}
According to the result in Lemma.\ref{Lem: Y Bound} with the assumption that the initial state dataset $\mathcal{D}$ are identical for FBSDE w/ and w/o importance sampling.
\begin{equation}
	\begin{aligned}
    |\delta Y_T|^2&\leq g_x e^{\xi T}|\vx_1-\vx_2|^2\\
    &=L_1 |\vx_1-\vx_2|^2\\
    |\delta Y_0|^2&\leq e^{H_zT}\left[ g_x e^{\xi T}+H_x\frac{e^{\xi T}-1}{H_z \Sigma_x}\right]|\vx_1-\vx_2|^2\\
    &=L_2 |\vx_1-\vx_2|^2\\
    \xi&=I+\mu_x+\mu_u u_x+\Sigma_x\\
	\end{aligned}
\end{equation}
Similarly, According to Lemma.\ref{Lem:lemma1-impt-samp}, one have,
\begin{equation}
	\begin{aligned}
        |\delta Y_T|^2&\leq g_x e^{\tilde{\xi} T}|\vx_1-\vx_2|^2\\
        &=\tilde{L}_1 |\vx_1-\vx_2|^2\\
        |\delta Y_0|^2&\leq e^{2(H_z+k_z)T}\left[ g_x e^{\tilde{\xi} T}+H_x(H_z^{-1}+k_z^{-1})\frac{e^{\tilde{\xi} T}-1}{2\Sigma_x}\right]|\vx_1-\vx_2|^2\\
        &=\tilde{L}_2 |\vx_1-\vx_2|^2\\
        \tilde{\xi}&=I+\tilde{\mu}_x+\tilde{\mu}_u u_x+\Sigma_x\\
	\end{aligned}
\end{equation}

We have $\tilde{\mu}_x =\mu_x+m_x \geq \mu_x$ and $\tilde{\mu}_u =\mu_u+m_u \geq \mu_u$, where $m_x$ and $m_u$ are the Lipschitz constants for $m_s$ w.r.t. $\vx$ and $\vu$ defined in equation.\ref{fnc:lipz-impt-samp} . Then we have $\tilde{\xi}\geq \xi$ which leads to $\tilde{L}_1\geq L_1$. 
Noticing that the drift term in BSDE w/o IS is $H_s=C^{i *}+V_xGU_{*,0}$, while IS term is $k_s=V_xGU_{*,0}$. then we can have $k_x\leq H_x$, and $k_z\leq H_z$ which leads to
\begin{equation}
	\begin{aligned}
    \frac{1}{2}(\frac{1}{H_z}+\frac{1}{k_z})>\frac{1}{H_z}
	\end{aligned}
\end{equation}
We have known that $\tilde{\mu}_x\geq \mu_x$ and $\frac{1}{2}(\frac{1}{H_z}+\frac{1}{k_z})>\frac{1}{H_z}$. Then we can have $\tilde{L}_1\geq L_1$ and $\tilde{L}_2 \geq L_2$ strictly.
\section{Invariant Layer Introductions and Implementation Techniques}\label{Appendix:invar-layer}
\subsection{Invariant Mapping}
A function $f$ maps its domain from $\mathcal{X}$ to $\mathcal{Y}$. Domain $\mathcal{X}$ is a vector space $\sR^{d}$ and $\mathcal{Y}$ is a continuous space $\sR$. Assume the function takes a set as input: $\sX=\left\{ x_1...x_N \right\}$, then the function $f$ is invariant if it satisfies property \ref{appendix:invar-property}.
\newlist{Properties}{enumerate}{2}
\setlist[Properties]{label=Property \arabic*.,itemindent=*,font=\textbf}
\begin{Properties}\label{appendix:invar-property}
  \item A function $f:\mathcal{X}\rightarrow\mathcal{Y}$ defined on sets is permutation invariant to the order of objects in the set. i.e. For any permutation function $\pi$: $f(\left\{ x_1...x_N \right\})=f(\left\{ x_{\pi(1)}...x_{\pi(N)} \right\})$
\end{Properties}
In this paper, we discuss the case when $f$ is a neural network only.
\begin{theorem}\citep{zaheer2017deep}
$\mX$ has elements from countable universe. A function $f(\mX)$ is a valid permutation invariant function, i.e invariant to the permutation of $\mX$, iff it can be decomposed in the form $\rho(\sum_{x\in\mX}\phi(x))$, for appropriate functions $\rho$ and $\phi$.
\end{theorem}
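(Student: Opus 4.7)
The plan is to prove both directions of the iff separately. The ``if'' direction is immediate: since real addition is commutative, $\sum_{x\in\mX}\phi(x)$ depends only on the multiset of values, not on the ordering in which they are summed, so $\rho(\sum_{x\in\mX}\phi(x))$ is automatically permutation invariant. The entire substance of the theorem lies in the converse.

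For the ``only if'' direction, the strategy is to engineer $\phi$ so that the mapping $\mX \mapsto \sum_{x\in\mX}\phi(x)$ is \emph{injective} on the collection of finite (multi)sets drawn from the universe. Once that is achieved, we can simply set $\rho := f\circ \Phi^{-1}$ on the image of $\Phi(\mX) := \sum_{x\in\mX}\phi(x)$, and extend $\rho$ arbitrarily off the image; permutation invariance of $f$ is precisely what guarantees that $f$ factors through the unordered object $\mX$, so this definition is well posed. Thus the problem reduces to constructing an injective encoding of finite multisets over a countable universe into $\mathbb{R}$ by additive aggregation of a per-element feature.

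To build $\phi$, first enumerate the countable universe as $u_1, u_2, u_3, \ldots$ and fix a uniform cardinality bound $N$ on the sets of interest (the theorem is typically stated for sets of size at most some $N$; for sets of unbounded size one uses a base that grows, or restricts to subsets of a fixed size). Define $\phi(u_k) := (N+1)^{-k}$. Then for any multiset $\mX$ with multiplicities $n_k \in \{0,1,\ldots,N\}$, the sum $\sum_{x\in\mX}\phi(x) = \sum_k n_k (N+1)^{-k}$ is the unique base-$(N+1)$ fractional expansion with digits $n_k$, and so the multiset of digits, and therefore the multiset $\mX$, is recoverable from this real number. Hence $\Phi$ is injective, and the decomposition $f(\mX) = \rho(\Phi(\mX))$ is valid. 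If one only wishes to cover sets (no repetitions), the simpler choice $\phi(u_k) = 2^{-k}$ with binary expansions suffices.

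The main obstacle I expect is not any single calculation but rather the careful bookkeeping around the countability and cardinality hypotheses: one must pick a base large enough to prevent carry interference between multiplicity digits, and one must verify that the resulting real-valued series converges (which it does, being dominated by a geometric series), and finally that $\rho$ can be realized as a \emph{continuous} or \emph{measurable} function if the ambient result requires this regularity. A secondary subtlety is that $\rho$ is only defined on the countable image $\Phi(\{\mX\})$; any extension off this set is permissible for the set-theoretic statement, but if a smoother extension is needed (for the neural-network approximation corollary the authors ultimately invoke), one appeals to standard extension/approximation theorems rather than reworking the core argument.
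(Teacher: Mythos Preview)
The paper does not supply its own proof of this theorem; it is stated as a cited result from Zaheer et al.\ (2017) and used as a black box to justify the invariant-layer architecture. Your argument is correct and is essentially the proof given in the original Deep Sets paper: enumerate the countable universe, map each element to a distinct negative power of a sufficiently large base so that the additive aggregate is an injective encoding of the (multi)set, and then define $\rho$ as $f$ composed with the inverse of that encoding on the image. There is nothing further to compare against in this paper.
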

In the symmetric multi-agent system, each agent is not distinguishable. This property gives some hints about how to extract the features of the $-i$th agents using a neural network. The states of the $-i$th agents can be represented as a set: $\mX=\left\{X_1,X_2,...,X_{i-1},X_{i+1},...,X_{N}\right\}$. We want to design a neural network $f$ which has the property of permutation invariance. Specifically, $\phi$ is represented as a one layer neural network and $\rho$ is a common nonlinear activation function, and the invariant layer module is shown in Fig.\ref{fig:invar_layer}.
\subsection{Feature Extractor with Invariant Layer Architecture}\label{Appendix:feature Extractor arch}
The architecture of feature extractor with invariant layer is described  in Fig.~\ref{fig:invar_layer}.
\subsection{Invariant Layer Techniques}
Noticing that all the agents has the access to the global states, we define the state input features of invariant layer for the $i$th agent as:
	\begin{equation}\label{eq:Partial_Observable}
		\mX_{t}=\left\{X_i,X_1,X_2...,X_{i-1},X_{i+1},...X_N\right\},
	\end{equation}
	with shape of $[B,N,N_x]$, where $B$ is the batch size, $N_x$ is the dimension of the observed states. In the other word, \textbf{we always put own feature in the first position}. For each agent $i$, there will exist such a feature tensor, then for the feature extractor, the shape of input is $[BS,N,N,N_x]$. Therefore, the shape of input tensor will become $[BS,N,N-1,N_x]$ for invariant layer. where $N$ is the number of agents. First,we could use neural network to map the observed states to the feature space with dimension $N_f$. Then the shape of the tensor will become $[BS,N,N-1, N_f]$. After summing up the features of all the element in the set, the dimension of the tensor would reduce to $[BS,N,1, N_f]$, and we denote this feature tensor as $F$. However, the memory complexity is $\mathcal{O}(N^2 \times N_f)$ which is not tolerable when the number of agent $N$ increases. Alternatively, we can simply map the tensor $\mX_t$ whose dimension is $[BS,N,N_f]$ into the desired feature dimension $N_f$, then the shape of the tensor would become $[BS,N,N_f]$, and we denote this tensor as $F^{\prime}$. Now we create another tensor which is the average of features of element in set with size $[BS,1,N_f]$ and we denote it to be $\bar{F}^{\prime}$. Then we compute $\hat{F}=(\bar{F}^{\prime}\times N-F^{\prime})/(N-1)$ which has size of $[BS,N,N_f]$. We can find that $\hat{F}=F$, and the memory complexity of computing $\hat{F}$ is just $\mathcal{O}(N\times N_f)$. The derivation only holds when the system is symmetric and the agents are not distinguishable. The technique can be extended to higher state dimension for individual agent.
\begin{figure*}[tbp]
\centering
\includegraphics[width=0.7\linewidth]{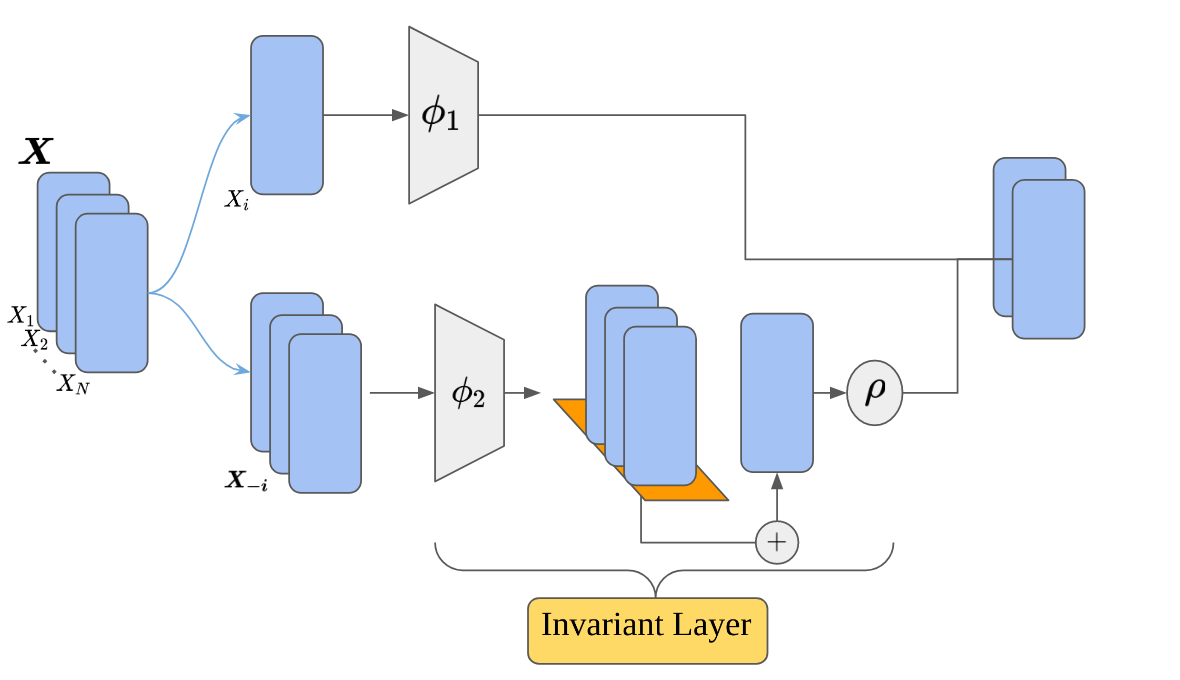}
\caption{Feature extractor architecture. $\phi$ represents fully connected neural network. $\rho$ is the ReLU activation function.}
\label{fig:invar_layer}
\end{figure*}
\section{Experiment Configurations}
\label{appendix:config}
This section elaborates the experiment configurations for $\S$\ref{sec:simulation}. For all the simulation, the number of SGD iteration is fixed as $N_{SGD}=100$. We are using Adam as optimizer with 1E-3 learning rate for all simulations. \\
\subsection{Inter-bank Experiments}
In section $\S$\ref{subsec:interbank}, For the prediction of initial value function $V_0^i$, all frameworks are using 2 layers feed forward network with 128 hidden dimension. For the baseline framework, we followed the suggested configuration motioned in \cite{han2018solving}. At each time steps, $V_{x}^i$ is approximated by three layers of feed forward network with 64 hidden dimensions. We add batch norm \cite{ioffe2015batch} after each affine transformation and before each nonlinear activation function. For Deep FBSDE with LSTM backbone, we are using two layer LSTM parametrized by 128 hidden state. If the framework includes the invariant layer, the number of mapping features is chosen to be 256. The hyperparameters of the dynamics is listed as following:
\begin{equation}
\begin{aligned}
 a=0.1, \ q=0.1,\  c=0.5,\  \epsilon=0.5,\  \rho=0.2,\  \sigma=1, T=1.
\end{aligned}    
\end{equation}
In the simulation, the time horizon is separated into 40 time-steps over 1 second by Euler method. Learning rate is chosen to be 1E-3 which is the default learning rate for Adam optimizer. The initial state for each agents are sampled from the uniform distribution $[-\delta_0,\delta_0]$. Where $\delta_0$ is the constant standard deviation of state $\mX(t)$ during the process as described in \cite{han2019deep}. In the evaluation, we are using 256 new sampled trajectory which are different from training trajectory to evaluate the performance. The number of stage is set to be 100 which is enough for all framework to converge.\\
\subsection{Belief Space Car Racing}
In $\S$\ref{sec:car racing result}, the hyperparameter is listed as following:
\begin{equation}
\begin{aligned}
 c_{drag}=0.01, \ L=0.1,\  c=0.5,\ T=10.0
\end{aligned}    
\end{equation}
The observation noise is sampled from Gaussian noise $m \sim \mathcal{N}(0,0.1\mI)$. The time horizon is enrolled into 100 time-steps by Euler method. In this experiments, the initial value $V_i$ is approximated a single trainable scale and $V_{x,i}(t)$ is approximated by two layers of LSTM parametrized with 32 hidden dimensions. The number of stage is set to be 10.
\section{FBSDEs and Analytical Solution for Inter-Bank Borrowing/Lending Problem}
\subsection{FBSDEs for Inter-Bank Borrowing/Lending Problem}\label{Appendix:FBSDEs for Inter-bank}
By plugging the running cost (\ref{fnc:inter-bank running cost}) to the HJB (\ref{Final_form_HJB}) function, one can have,
      \begin{equation}
      \begin{aligned}
         V_{t}^i+\inf \limits_{U_i \in \mathcal{U}_i}\left[\sum_{j=1}^{N}[a(\bar{X}-X_j)+U_j^2]V_{x_j}+\frac{1}{2}U_i^{2}-qU_{i}(\bar{X}-X_i)+\frac{\epsilon}{2}(\bar{X}-X_i)^2\right]+\\
         \frac{1}{2}\tr(V_{xx}^i\Sigma\Sigma\T)=0.\\
      \end{aligned}    
      \end{equation}
   By computing the infimum explicitly, the optimal control of player $i$ is:$U_i^*(\mX,t)=q(\bar{X}-X_i)-V_{x}^{i}(\mX,t)$. The final form of HJB can be obtained as
      \begin{equation}
      \begin{aligned}
         V_{t}^i+\frac{1}{2}\tr(V_{xx}^i\Sigma\Sigma\T)+a(\bar{X}-X_i)V_{x}^i+\sum_{j\neq i}[a(\bar{X}-X_j)+U_j]V_{x}^j\\
         +\frac{\epsilon}{2}(\bar{X}-X_i)^2-\frac{1}{2}(q(\bar{X}-X_i)-V_{x}^i)^2=0
      \end{aligned}\label{eq:HJB_interbank} 
      \end{equation}
   Applying Feynman-Kac lemma to \eqref{eq:HJB_interbank}, the corresponding FBSDE system is
      \begin{equation}
      \begin{aligned}
         \rd  \mX(t)&=(f(\mX(t),t) +G(\mX(t),t)\vu(t))\rd t+\Sigma(t,
         \mX(t))\rd\boldsymbol{W_t},\quad \mX(0)=\vx_0\\
           \rd V^i&=-[\frac{\epsilon}{2}(\bar{X}-X_i)^2-\frac{1}{2}(q(\bar{X}-X_i)-V_{x}^i)^2+U_i]\rd t+V_{x}^{i\T} \Sigma dW, \quad V(T)=g(\mX(T)).
      \end{aligned}    
      \end{equation}
\subsection{Analytical solutions for Inter-Bank Borrowing/Lending Problem}
\label{Appendix:Analytical Solution}
The analytical solution for linear inter-bank problem was derived in \cite{carmona2013mean}. We provide them here for completeness. Assume the ansatz for HJB function is described as:
	\begin{equation}
	\begin{aligned}
        V_i(t,\mX)=\frac{\eta(t)}{2}(\bar{X}-X_i)^2=\mu(t) \quad i\in \sI
	\end{aligned}
	\end{equation}
Where $\eta(t),\mu(t)$ are two scalar functions. The optimal control under this ansatz is:
	\begin{equation}
	\begin{aligned}
        U_i^{\star}(t,\mX)=\left[q+\eta(t)(1-\frac{1}{N})\right](\bar{X}-X_i)
	\end{aligned}
	\end{equation}
By pluginging the ansatz into HJB function derived in equation (\ref{eq:HJB_interbank}), one can have,
	\begin{equation}
	\begin{aligned}
        \dot{\eta}(t)&=2(a+q)\eta(t)+(1-\frac{1}{N^2})\eta^2(t)-(\epsilon-q^2), \quad \eta(T)=c,\\
        \dot{\mu}(t)&=-\frac{1}{2}\sigma^2(1-\rho^2)(1-\frac{1}{N})\eta(t), \quad \mu(T)=0.
	\end{aligned}
	\end{equation}
There exists the analytical solution for the Riccati equation described above as,
	\begin{equation}
	\begin{aligned}
        \eta(t)=\frac{-(\epsilon-q^2)(e^{(\delta^{+}-\delta^{-})(T-t)}-1)-c(\delta^{+}e^{(\delta^{+}-\delta^{-})(T-t)}-\delta^{-})}{(\delta^{-}e^{(\delta^{+}-\delta^{-})(T-t)}-\delta^{+})-c(1-1/N^2)(e^{(\delta^{+}-\delta^{-})(T-t)})-1}.
	\end{aligned}
	\end{equation}
Where $\delta^{\pm}=-(a+q)\pm\sqrt{R}$ and $R=(a+q)^2+(1-1/N^2)(\epsilon-q^2)$

\section{Additional Tables and Figures}

\subsection{Evaluation Loss with different number of agents}\label{appendix:eval-loss-inter-bank}
Fig.\ref{fig:eval-comparsion-inter-bank} shows the comparison of SDFP-FBSDE and baseline by evaluation loss.
\begin{figure*}[h]
\centering
\includegraphics[width=0.4\linewidth]{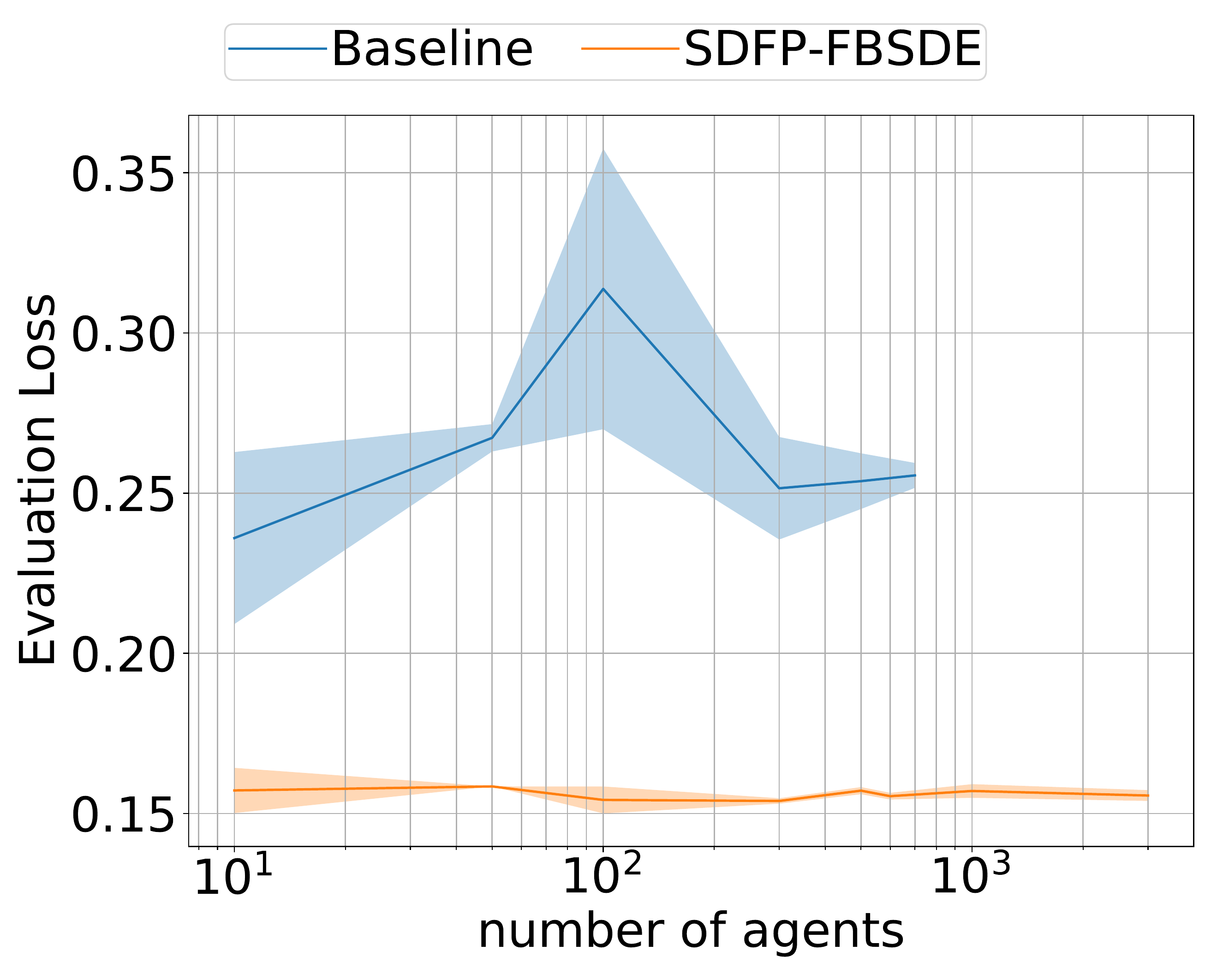}
\vspace{-13 pt}
\caption{Comparison of SDFP-FBSDE and Baseline for inter-bank problem with different number of agents evaluated on evaluation loss(\ref{fnc:eval-loss}).}
\vspace{-7.5 pt}
\label{fig:eval-comparsion-inter-bank}
\end{figure*}
\subsection{Superlinear Inter-Bank Plots}\label{Appendix:superlinear}
Fig.\ref{fig:superlinear} demonstrates the performance difference between Baseline and our algorithm. One can find that our algorithm convergence faster and better than baseline. Since in the superlinear case, the influence of control term in the forward dynamics is mitigated, then the final performances are similar.
\begin{figure*}[h]
\centering
\includegraphics[width=0.6\linewidth]{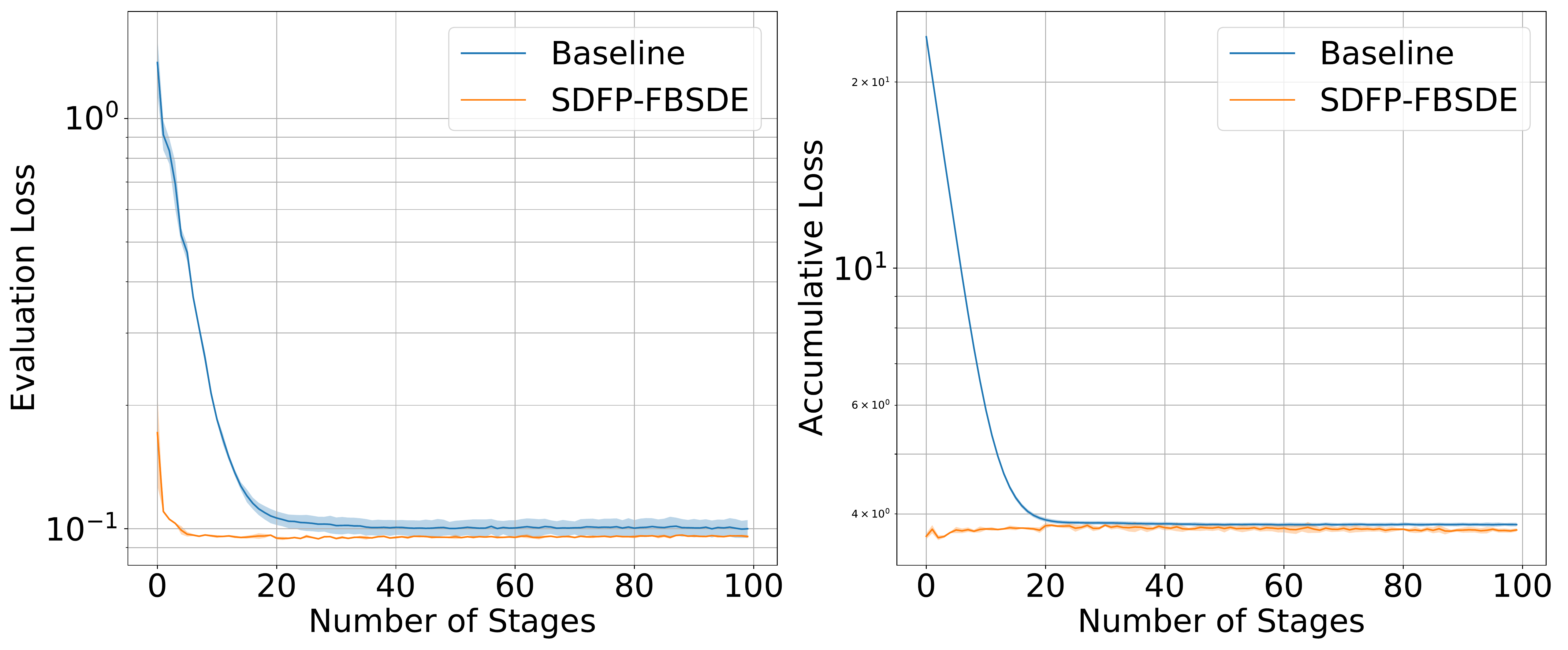}
\vspace{-13 pt}
\caption{Comparison of SDFP-FBSDE and Baseline for inter-bank problem in superlinear case with evaluation loss (\ref{fnc:eval-loss}) and cumulative loss(\ref{fnc:acc-loss})}
\vspace{-7.5 pt}
\label{fig:superlinear}
\end{figure*}
\subsection{Posterior Plot of Car Racing}\label{Appendix:car-racing post}
Fig.\ref{fig:Car-racing post} illustrates the trajectory of single game with posterior estimated by each car. One can find that the variance does not blow up, and both of two cars are staying in the track.
\begin{figure*}[h]
\centering
\includegraphics[width=0.5\linewidth]{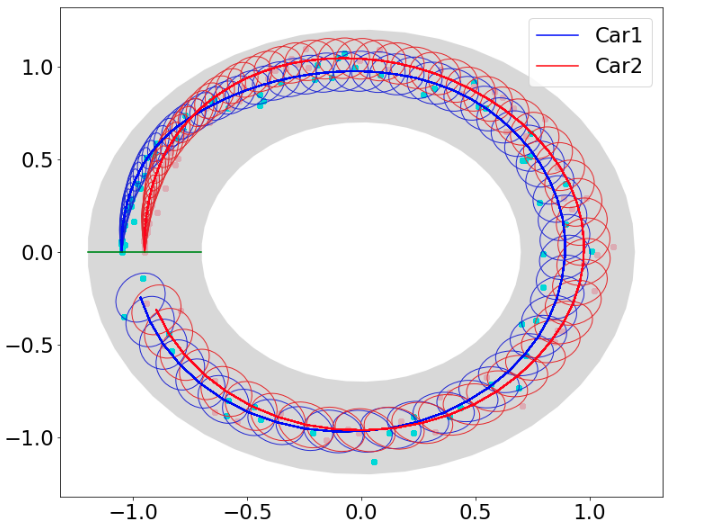}
\vspace{-13 pt}
\caption{Car racing plot with posterior trained with SDFP-FBSDE. The competition loss is turned off}
\vspace{-7.5 pt}
\label{fig:Car-racing post}
\end{figure*}

\subsection{DFP-FBSDE Framework}\label{appendix:NN_arch}
In this subsection, we demonstrate the framework of Deep Fictitious Play FBSDE (DFP-FBSDE) in Fig.\ref{fig:NN_arch}. Each $NN$ (blue box) represents for the FBSDE module shown in Fig.\ref{fig:architecture}.
\begin{figure*}[h]
\centering
\includegraphics[width=0.8\linewidth]{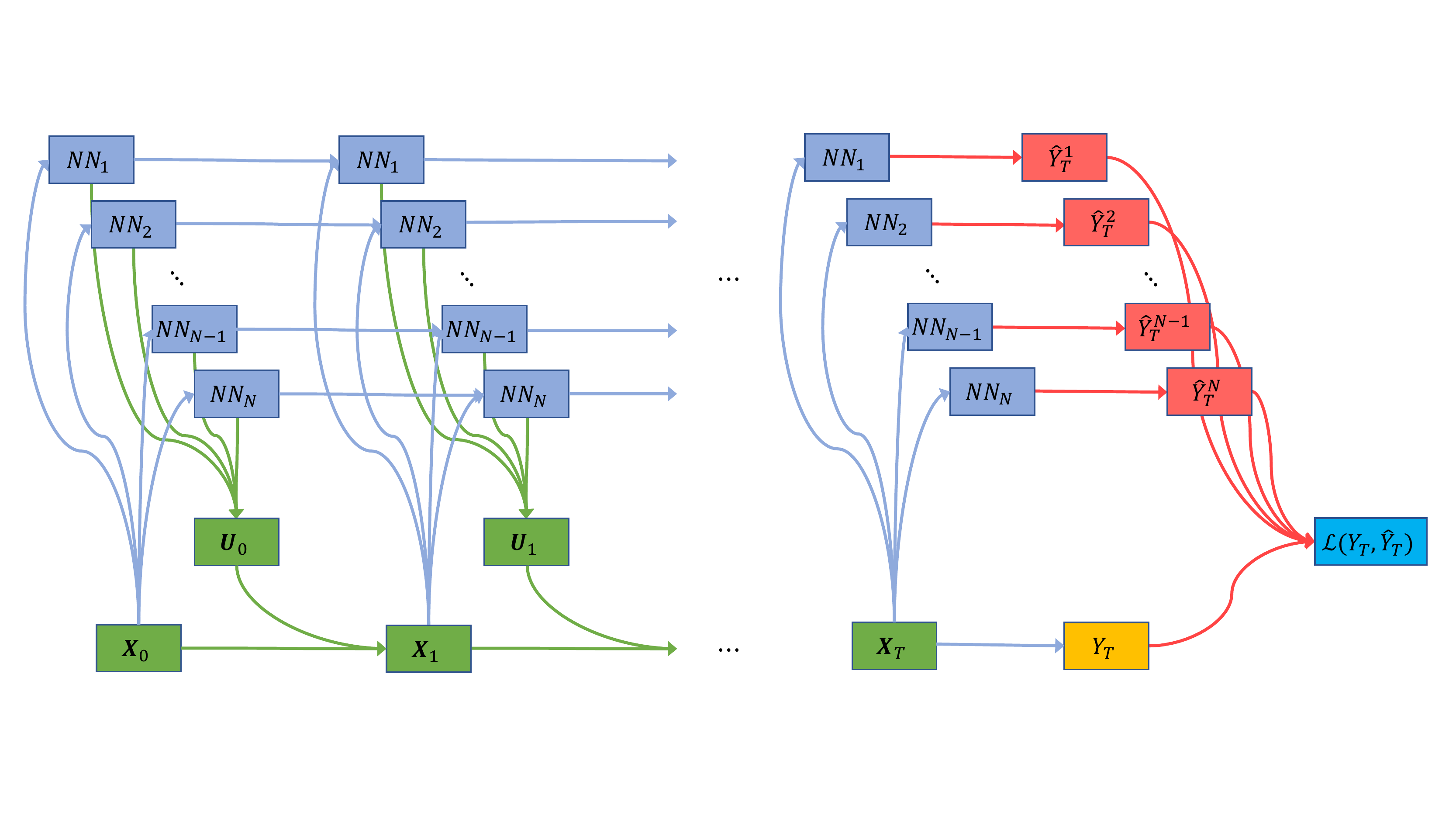}
\vspace{-13 pt}
\caption{SDFP framework for $N$ Players. Each NN block has the architecture in Fig.~\ref{fig:architecture}}
\vspace{-7.5 pt}
\label{fig:NN_arch}
\end{figure*}

\section{Belief Car Racing}
\label{appendix:racing}
The framework for the racing problem is trained with batch size of 64, and 100 time steps over a time horizon of 10 seconds.
\subsection{Continuous Time Extended Kalman Filter}\label{appendix:EKF}
The Partial Observable Markov Decision Process is generally difficult to solve within infinite dimensional space belief. Commonly, the Value function does not have explicit parameterized form. Kalman filter overcome this challenge by presuming the noise distribution is Gaussian distribution. In order to deploy proposed Forward Backward Stochastic Differential Equation (FBSDE) model in the Belief space, we need to utilize extended Kalman filter in continuous time \cite{jazwinski1970stochastic} correspondingly. Given the partial observable stochastic system:
	\begin{equation}\label{fnc:EKF-dyn}
		\frac{\rd x}{\rd t}=f(x,u,w,t), \quad \text{and} \quad	z=h(x,v,t)
	\end{equation}
	
	Where $f$ is the stochastic state process featured by a Gaussian noise $	w\sim \mathcal{N}(0,Q)$, $h$ is the observation function while $v\sim \mathcal{N}(0,R)$ is the observation noise. Next, we consider the linearization of the stochastic dynamics in eq.(\ref{eq:Partial_Observable}) represented as follows:
	\begin{equation}
		A=\frac{\partial f}{\partial x}\bigg|_{\hat{x}}, L=\frac{\partial f}{\partial w}\bigg|_{\hat{x}},	C =\frac{\partial h}{\partial x}\bigg|_{\hat{x}}, M=\frac{\partial h}{\partial v}\bigg|_{\hat{x}},
		\tilde{Q}=LQL\T, 
		\tilde{R}=MRM\T \label{Kalman_components}
	\end{equation}
	 one can write the posterior mean state $\hat{x}$ and prior covariance matrix $P^{-}$ estimation update rule by \cite{simon2006optimal}:
	\begin{equation}
	\begin{aligned}
		\hat{x}(0)&\mathbb{E}[x(0)], \quad 	P^{-}(0)=\mathbb{E}[(x(0)-\hat{x})(x(0)-\hat{x})\T]\\
		K&=PC\T\tilde{R}^{-1}\\
		\dot{\hat{x}}&=b(\hat{x},u,w,t)=f(\hat{x},u,w_0,t)+K[z-h(\hat{x},v_0,t)]\\
		\dot{P}^{-}&=AP^{-}+P^{-}A\T +\tilde{Q}-P^{-}C\T \tilde{R}^{-1}CP^{-}
	\end{aligned}
	\end{equation}
	We follow the notation in \citep{simon2006optimal}, where $x$ is the real state, $\hat{x}$ is the mean of state estimated by Kalman filter based on the noisy sensor observation, $P^{-}$ represents for the covariance matrix of the estimated state, nominal noise values are given as $w_0=0$ and $v_0=0$, where superscript + is the posterior estimation and $-$ is the prior estimation. Then we can define a Gaussian belief dynamics as $\vb(\hat{x}_k,P^{-}_{k})$ by the mean state $\hat{x}$ and variance $P^{-}$ of normal distribution $\mathcal{N}(\hat{x}_k,P^{-}_{k})$

	The belief dynamics results in a decoupled FBSDE system as follows:
	\begin{equation}
	\begin{aligned}
		\rd \vb_{k}&=g(\vb_{k},\mU_{k},0)\rd t+\Sigma(\vb_{k},\mU_{k},0)\rd W, \rd W \sim \mathcal{N}(0,I)\\
		\rd V&=-C^{i^{\star}}\rd t+V_x^{i\T}\Sigma \rd W
	\end{aligned}
	\end{equation}
	where:
	\begin{equation}\label{fnc:EKF-FBSDE}
	\begin{aligned}
		g(\mathbf{b}_k,\mU_k)&= 	\left[
										\begin{matrix}
											b(t,\mX(t),U_{i,m}(t);\mU_{-i,m}) \\
										vec(A_kP^{-}_k+P^{-}_kA\T_k +\tilde{Q}_k-P^{-}_kC\T_k
																			\tilde{R}^{-1}_kC_kP^{-}_k) \\
										\end{matrix}
										\right]\\
		\Sigma(\mathbf{b}_k,\mU_k)&= 	\left[
			\begin{matrix}
			\sqrt{K_kC_kP_k^{-}}\rd t \\
			\mathbf{0} \\
			\end{matrix}
			\right]\\
		V(T)&=g(\mX(T))\\
		\hat{\mX}(0)&=\mathbb{E}[\mX(0)]\\
		P^{-}(0)&=\mathbb{E}[(\mX(0)-\hat{\mX})(\mX(0)-\hat{\mX})\T]\\
   \end{aligned}
   \end{equation}
In the car racing case, the dynamic function $f(\cdot,\cdot)$ in eq.\ref{fnc:EKF-dyn} is described as, 
      \begin{equation}
      \begin{aligned}
           \rd \mX &= (f(\mX)+G(\mathbf{X})\mathbf{U})\rd t + \Sigma(\mathbf{X})\rd \mW, \quad \mathbf{z} = h(\mathbf{X}) + m\\
           f(\mathbf{X}) &= \begin{bmatrix}v\cos\theta\\ v\sin\theta\\-c_{\text{drag}}v\\0 \\ \end{bmatrix},~~ G(\mathbf{X})= \Sigma(\mathbf{X}) = \begin{bmatrix}0 & 0 \\ 0 & 0 \\ 1 & 0 \\ 0 & v/L \end{bmatrix}, ~~ h(\mX) = \vx \\
      \end{aligned}    
      \end{equation}
      
   Where $\rd \mW$ is standard Brownian motion.
   \subsection{Cost Functions}\label{appendix:car-racing cost}
   We consider the problem of two cars racing on a circular track. The cost function of each car is designed as
   \begin{equation*}
       J_t =  \underbrace{\exp\big(\big|\frac{x^2}{a^2}+\frac{y^2}{b^2}-1\big|\big)}_{\textit{track cost}} + \underbrace{\text{ReLU}\big(-v\big)}_{\textit{velocity cost}} + \underbrace{\exp\big(-d)}_{\textit{collision cost}}
   \end{equation*}
   Where $d$ is Euclidean distance between two cars. We use continuous time extended Kalman Filter to propagate belief space dynamics described in \eqref{fnc:EKF-FBSDE}.
   
   We introduce the concept competitive game by using an additional competition cost:
   \begin{equation*}
   \begin{aligned}
             J_{competition}&=\exp(-\left[
                                 \begin{matrix}
                                    cos(\theta)\\
                                     sin(\theta)\\
                                 \end{matrix}
                                 \right]\T
                                 \left[
                                 \begin{matrix}
                                    x_1-x_2\\
                                     y_1-y_2\\
                                 \end{matrix}
                                 \right])\
   \end{aligned}
   \end{equation*}
   Where $x_i,y_i$ is the $x,y$ position of the $i$th car. When the $i$th car is leading, the competition loss will be minor, and it will increase exponentially when the car is trailing.\\
   Thanks to the decoupled BSDE structure, each car can measure this competition loss separately and optimize the value function individually.
\section{Hardware}\label{Appendix:hardware}
All simulations are run on
\begin{enumerate}
    \item Nvidia RTX TITAN
    \item Nvidia GTX TITAN BLACK
\end{enumerate}
\section{Algorithm}\label{appendix:Algo}
\begin{algorithm}[h]
\caption{Scalable Deep Fictitious Play FBSDE}
\begin{algorithmic}[1]
\STATE \textbf{Hyper-parameters}:$N$: Number of players; $T$: Number of timesteps; $M$: Number of stages in fictitious play; $N_{gd}$: Number of gradient descent steps per stage; $\mU_{0}$: the initial strategies for players in set $\sI$; $B$: Batch size; $\Delta t$: time discretization (Total time/Number of timesteps); $\pi$: Permutation function (\ref{appendix:invar-property}).
\STATE \textbf{Parameters}:$\phi$: Network weights for Initial Value (IV) prediction $f_{IV}(\cdot)$; $\theta$: Weights and bias of Backbone and Feature extractor (BF) $f_{BF}(\cdot)$.
\STATE Initialize trainable papermeters:$\theta^{0}$, $\phi^{0}$
\FOR{$m \leftarrow 1$ to $M$}
\STATE Generate $B$ sample $\vx_0$ and $B\times T$ Noise $\Delta \vw \sim \mathcal{N}(0, \mI\Delta t)$.
\FOR{$l \leftarrow 0$ to $N_{gd}-1$}
\FOR{$t \leftarrow 0$ to $T-1$}
\IF{t==0}
\STATE Predict value function for $i$th player: $\hat{y}^i_0=f_{IV}(\vx_0;\phi^{m\times N_gd+l})$
\ELSE
\STATE Compute network prediction $\hat{z}_{i}$ for $i$th player: $\hat{z}_i=\Sigma_i\T f_{BF}(\vx_t;\theta^{m\times N_{gd}+l})$
\ENDIF
\STATE Compute $i$th optimal control :$u_{i}^*=-R^{-1}(\Gamma_i^\mathrm{T}z_i+Q_i^\mathrm{T}\vx_t)$
\STATE Infer $-i$th players' network prediction and stop the gradient for them: $\hat{\vz}_{-i}=\Sigma_{-i} \T f_{BF}(\pi(\vx_t);\theta^{m\times N_{gd}})$
\STATE Compute $-i$th optimal Control and stop the gradient for them: $\vu_{-i}^*=-R^{-1}_{-i}(\Gamma_{-i}^\mathrm{T}\hat{\vz}_{-i}+Q_{-i}^\mathrm{T}\vx_t)$
\STATE Propagate FSDE: $\vx_{t+1}=f_{FSDE}(\vx_t,u_{i}^*, \vu_{-i}^*,\Delta \vw_t,t)$\quad (\ref{fnc:impt-FSDE})
\STATE Propagate BSDE: $\hat{y}^i_{t+1}=f_{BSDE}(\hat{y}^i_t,\vx_t,u_{i}^*, \vu_{-i}^*,\hat{z}_i,\Delta \vw_t,t)$\quad(\ref{fnc:impt-BSDE})
\ENDFOR
\ENDFOR
\STATE Compute True terminal value $y_T^{i}=g^i(\vx_T)$
\STATE Compute loss: $\mathcal{L}(\hat{y}_T^i,y_T^i)=\frac{1}{B}||\hat{y}_T^i-y_T^i||_2^2$ \quad (\ref{fnc:eval-loss})
\STATE Gradient Update: $\theta^{l},\phi^{l}$
\ENDFOR
\label{code:recentEnd}
\end{algorithmic}
\end{algorithm}

\end{document}